\definecolor{qzero}{HTML}{7A81FF}  
\definecolor{qone}{HTML}{FA5252}  
\definecolor{mygray}{HTML}{7F7F7F}
\newtheorem{theorem}{Theorem}
\newtheorem{proof}{Proof}[section]
\title{Finite State Automata Inside Transformers with Chain-of-Thought:\\ A Mechanistic Study on State Tracking}
\renewcommand{\thefootnote}{\fnsymbol{footnote}}
\author{
 \textbf{Yifan Zhang\textsuperscript{1,2}\footnotemark[1]},
 \textbf{Wenyu Du\textsuperscript{3}},
 \textbf{Dongming Jin\textsuperscript{1,2}},
 \textbf{Jie Fu\textsuperscript{4}\footnotemark[2]},
 \textbf{Zhi Jin\textsuperscript{1,2}\footnotemark[2]}
\\
 \textsuperscript{1}Key Laboratory of High Confidence Software Technology (PKU), MOE, China
 \\
 \textsuperscript{2}School of Computer Science, Peking University, China
 \\
 \textsuperscript{3}The University of Hong Kong,
 \textsuperscript{4}Shanghai AI Lab
\\
 \texttt{yifanzhang@stu.pku.edu.cn, fujie@pjlab.org.cn, zhijin@pku.edu.cn}
}
\begin{document}
\maketitle
\footnotetext[1]{Work done during internship at Shanghai AI Lab.}
\footnotetext[2]{Corresponding and co-senior authors.}

\renewcommand{\thefootnote}{\arabic{footnote}}

\begin{abstract}

Chain-of-Thought (CoT) significantly enhances the performance of large language models (LLMs) across a wide range of tasks, and prior research shows that CoT can theoretically increase expressiveness. However, there is limited mechanistic understanding of the algorithms that a Transformer with CoT (denoted as $\tt{Transformer}_{+CoT}$ in this paper) can learn. 
Our key contributions are: (1) We evaluate the state tracking capabilities of $\tt{Transformer}_{+CoT}$ and its variants, confirming the effectiveness of CoT. (2) Next, we identify the circuit (a subset of model components, responsible for tracking the world state), indicating that late-layer MLP neurons play a key role. We propose two metrics, compression and distinction, and show that the neuron sets for each state achieve nearly 100\% accuracy, providing evidence of an implicit finite state automaton (FSA) embedded within the model. 
(3) Additionally, we explore three challenging settings: skipping intermediate steps, introducing data noises, and testing length generalization. Our results demonstrate that $\tt{Transformer}_{+CoT}$ learns robust algorithms (FSAs), highlighting its resilience in challenging scenarios. 
Our code is available at \url{https://github.com/IvanChangPKU/FSA}.

\end{abstract}

\section{Introduction} \label{Introduction}

\begin{figure}[t] 
    \centering
    \includegraphics[width=\columnwidth]{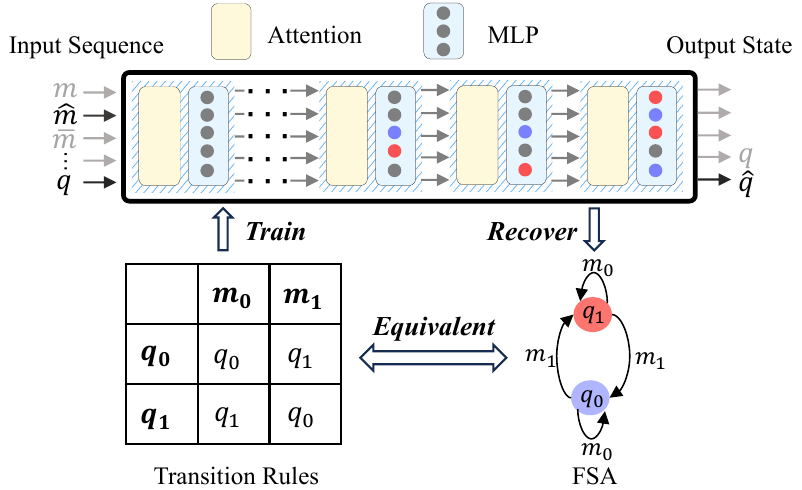}
    \caption{
    An illustration of one of the simplest state-tracking problems, $\mathbb{Z}_2$. After training on sequences generated from the $\mathbb{Z}_2$ transition rules, $\tt{Transformer}_{+CoT}$ successfully recovers an implicit finite state automaton (FSA) by differentiating between two internal states—\textcolor{qzero}{$q_0$} and \textcolor{qone}{$q_1$}—using two distinct and disjoint sets of neurons in the late-layer MLPs. These neuron groups are visually distinguished using colors: neurons corresponding to state \textcolor{qzero}{$q_0$} are marked in \textcolor{qzero}{purple}, those for \textcolor{qone}{$q_1$} in \textcolor{qone}{red}, while neurons not contributing to either state are shown in \textcolor{mygray}{gray}.
    }
    \label{fig:motivation}
\end{figure}

Transformer-based large language models (LLMs \citealp{touvron2023llamaopenefficientfoundation, openai2023gpt4}) revolutionize natural language processing (NLP) by demonstrating significant progress across various tasks. 
However, they still face challenges with basic calculations \citep{zhou2023algorithmstransformerslearnstudy}, complex reasoning \citep{valmeekam2024llmscantplanlrms, han2024folionaturallanguagereasoning}, and regular languages \citep{bhattamishra-etal-2020-ability}. Approaches such as Chain-of-Thought (CoT) prompting \citep{wei2023chainofthoughtpromptingelicitsreasoning} and scratchpads \citep{nye2021workscratchpadsintermediatecomputation} address these limitations by generating intermediate reasoning steps. 
To understand the success of CoT, prior work has analyzed its expressiveness from the perspective of formal language theory and circuit complexity. 
Theoretical works \citep{zhang2024autoregressivechainthought, qiu2024askshallgiventuring, li2024chainthoughtempowerstransformers} demonstrate that incorporating a linear number of intermediate steps increases the expressive power of transformers, enabling them to represent all \textbf{Finite State Automata}, which are a foundational class of automata.

However, theoretical expressiveness indicates only upper and lower bounds on what an architecture can express; it does not guarantee successful learning during training in practice. 
For instance, although recurrent neural networks (RNNs) are theoretically more expressive than transformers in Chomsky's computational hierarchy—being capable of recognizing all regular languages, whereas transformers fail to recognize certain types (\textit{e.g.}, periodic finite-state languages) \citep{delétang2023neuralnetworkschomskyhierarchy}—they often fail to outperform transformers in practice.
This discrepancy is primarily due to issues such as vanishing gradients, which hinder learning long-term dependencies, and difficulties in parallelization, which limit computational efficiency \citep{6795963, 10.5555/3295222.3295349, 10.5555/3495724.3495883}. 
Consequently, some studies investigate the expressiveness of these architectures through the lens of learnability by measuring performance in language modeling. For example, \citet{liu2023transformerslearnshortcutsautomata} demonstrate that training transformers with recency-biased scratchpads improves sequential accuracy. However, even near-perfect next-token prediction accuracy does not imply that generative models reconstruct a true world model \citep{vafa2024evaluatingworldmodelimplicit}. This raises a critical question: \textbf{Does CoT help transformers recover a world model in the form of FSAs, or do they merely learn shortcuts?}

To address this question, we extend the study of learnability beyond accuracy improvements, performing an internal mechanistic analysis of CoT's success. Specifically, we focus on \textit{state tracking}, a foundational task to evaluate expressiveness \citep{merrill2024illusionstatestatespacemodels}. In state tracking, a sequence of updates modifies the world state, which is represented as an FSA. The goal is to determine the final state after applying all updates sequentially. State tracking is a core capability of generative models and supports many downstream tasks — such as entity tracking \citep{kim-schuster-2023-entity}, chess \citep{toshniwal2022chesstestbedlanguagemodel}, and map navigation \citep{liu2023agentbenchevaluatingllmsagents}. 
Figure \ref{fig:motivation} illustrates \(\mathbb{Z}_2\), one of the simplest state tracking problems\footnote{The state tracking problem \(\mathbb{Z}_2\) is equivalent to parity, a formal language describing binary sequences with specific evenness or oddness properties.}, along with its corresponding FSA and transition rules. 

We begin by comprehensively evaluating the state tracking capabilities of $\tt{Transformer}_{+CoT}$, comparing it with other models (RNNs), transformer variants (\textit{e.g.}, those with recurrence \citep{fan2021addressinglimitationstransformersfeedback, 9878733}), and CoT variants (\textit{e.g.}, implicit CoT \citep{goyal2024thinkspeaktraininglanguage}). Empirically, we show that $\tt{Transformer}_{+CoT}$ is the only model capable of efficiently learning state tracking for sequences of arbitrary lengths across three groups: $\mathbb{Z}_{60}$, $A_4 \times \mathbb{Z}_5$, and $A_5$, in both in-distribution and out-of-distribution settings. 

Next, to provide a mechanistic explanation for this success, we apply interpretability techniques to analyze the algorithms learned by $\tt{Transformer}_{+CoT}$. Using activation patching \citep{10.5555/3495724.3496763}, we identify the circuits (specific model components) responsible for state tracking and observe that $\tt{Transformer}_{+CoT}$ relies heavily on late-layer MLP neurons. These neurons can be effectively grouped into states based on transition rules. To quantify this, we introduce two metrics: compression and distinction. Compression measures the similarity of representations for the same state under different input prompts, while distinction quantifies the separation between different states, even when their inputs are similar. We find nearly 100\% accuracy on both metrics at every intermediate step, providing strong evidence that the model reconstructs the world model (\textit{i.e.}, FSAs). 
For instance, in Figure \ref{fig:motivation}, $\tt{Transformer}_{+CoT}$ compresses inputs corresponding to two states (\textit{i.e.} \textcolor{qzero}{$q_0$} and \textcolor{qone}{$q_1$}) by activating two distinct sets of neurons. 

In real-world tasks that involve state tracking, state transitions are often implicit, and reasoning structures tend to be imperfect.
For example, mathematical reasoning is related to state tracking, which involves keeping track of the problem goal, and chain-of-thought annotations in existing datasets—such as OpenWebMath—contain skipping or noise.
To evaluate robustness in challenging scenarios, we assume three experimental settings: skip-step reasoning, noisy scratchpads, and length generalization. 
Through controlled experiments, we show that $\tt{Transformer}_{+CoT}$ learns robust algorithms even in noisy environments, suggesting that the underlying FSAs exhibit strong resilience. 

In summary, this work is the first to extend the study of learnability and expressiveness through mechanistic interpretation on state tracking, uncovering the underlying algorithms of $\tt{Transformer}_{+CoT}$. Our contributions are as follows: 

\begin{enumerate}[noitemsep]
    \item We conduct a comprehensive evaluation of the state tracking capabilities of $\tt{Transformer}_{+CoT}$, demonstrating its unique ability to track states of arbitrary lengths across multiple groups ($\mathbb{Z}_{60}$, $A_4 \times \mathbb{Z}_5$, and $A_5$) in both in-distribution and out-of-distribution settings.
    \item Using interpretability techniques, including activation patching, we analyze the learned algorithms in $\tt{Transformer}_{+CoT}$. We identify the activation of late-layer MLP neurons and classify them into states based on transition rules, achieving nearly 100\% accuracy in metrics of compression and distinction, which confirms the model's reconstruction of the world model (FSAs).
    \item We explore $\tt{Transformer}_{+CoT}$ in three challenging settings and find that it learns resilient algorithms capable of effective state tracking in noisy conditions. 
\end{enumerate}

\section{Background} \label{Background&Related Work}

\subsection{FSA and State Tracking}

We adopt the conventional definition of a finite state automaton (FSA) as a tuple $\mathcal{A} = (\Sigma, Q, q_0, \delta)$, where $\Sigma$ is the alphabet, $Q$ is a set of states, $q_0$ is the initial state, and $\delta$ is the transition function \citep{10.5555/1454320}. 
Formally, state tracking can be framed as solving a word problem on a finite monoid $(M, \cdot)$, where the objective is to compute the product $m_1 \cdot m_2 \cdot \ldots \cdot m_n \in M$ \citep{merrill2024illusionstatestatespacemodels}. 
When $M$ is finite, the computation can be carried out by a corresponding finite state automaton $(M, M, e, \delta)$, where the identity element $e$ acts as the initial state and the transition function is defined as $\delta(m_1, m_2) = m_1 \cdot m_2$ for all $m_1, m_2 \in M$.

As generative models, transformers augmented with chain-of-thought generate state sequences $(q_1 \ldots q_n) \in M^*$ conditioned on input sequences $(m_1 \ldots m_n) \in M^*$. 
Our work centers on word problems in the context of groups, which are monoids with inverses. 
We specifically investigate two structures: the cyclic group $\mathbb{Z}_k$, defined by addition modulo $k$, and the alternating group $A_k$, a subgroup of the symmetric group $S_k$ comprising all even permutations of $k$ elements.

\subsection{Mechanistic Interpretation of MLPs} 
\citet{geva-etal-2022-transformer} demonstrate that multilayer perceptrons (MLPs) contribute additive updates to the residual stream, which can be decomposed into weighted sums of sub-updates.
In particular, given input $\mathbf{x^l}$ at layer $l$, the MLP can be expressed using parameter matrices $\mathbf{K^l}, \mathbf{V^l} \in \mathbb{R}^{d_{\text{mlp}} \times d_{\text{m}}}
$, where $d_{\text{mlp}}$ is the MLP intermediate dimension and $d_{\text{m}}$ is the model dimension.
Additionally, a non-linear activation function $f$ is applied as follows:
\[\text{MLP}^l(\mathbf{x^l}) = f(\mathbf{K^l}\mathbf{x^l})\mathbf{V^l} \]
Expanding this further, it can be decomposed as: 
\[\text{MLP}^l(\mathbf{x^l}) = \sum_{j=1}^{d_{\text{mlp}}} f(\mathbf{x^l} \cdot \mathbf{k_j^l}) \mathbf{v_j^l} = \sum_{j=1}^{d_{\text{mlp}}} m_{j}^l \mathbf{v_j^l}
\]
where $\mathbf{k^l_j} \in \mathbb{R}^{d_{\text{m}}}$ and $\mathbf{v^l_j} \in \mathbb{R}^{d_{\text{m}}}$ correspond to the $j\text{-th}$ row vectors of $\mathbf{K^l}$ and $\mathbf{V^l}$, respectively. 
The scalar $m_{j}^l = f(\mathbf{x^l} \cdot \mathbf{k_j^l})$ represents the activation coefficient for the neuron $\mathbf{v_j^l}$. 
Notably, when these sub-updates $m_j^l \mathbf{v_j^l}$ are projected into the vocabulary space using the logit lens \citep{geva2021transformerfeedforwardlayerskeyvalue}, they can be interpreted in a human-understandable way.

\section{Evaluating State Tracking Capability Across Architectures} \label{sec:model performance}

\begin{figure*}[ht] 
    \centering
    \includegraphics[width=\linewidth]{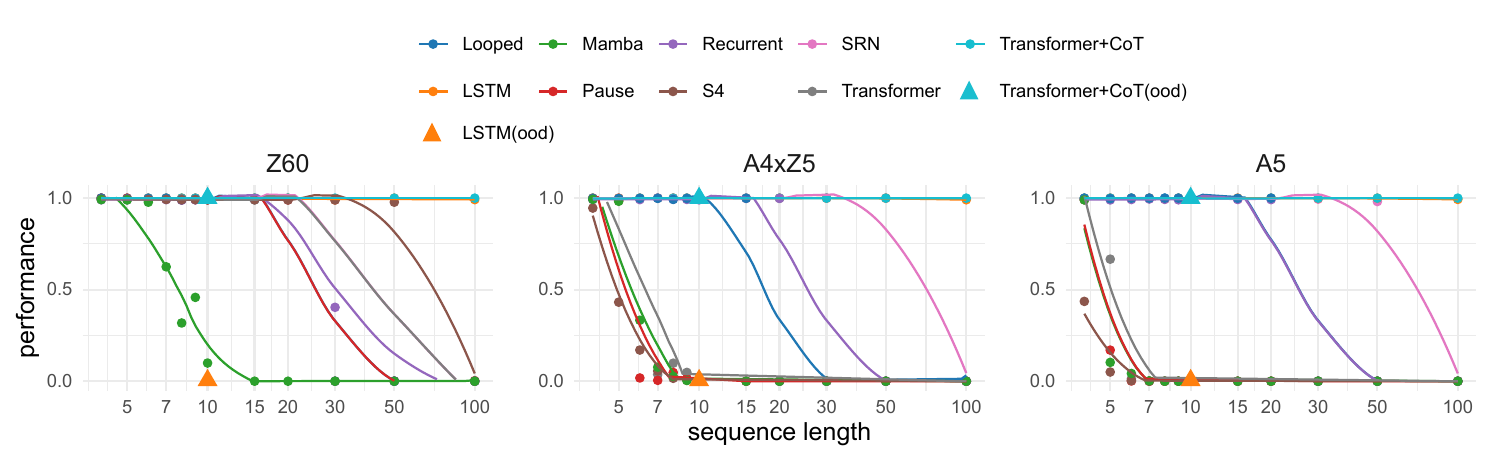}
    \caption{Model accuracy across sequence lengths for $\mathbb{Z}_{60}, A_4 \times \mathbb{Z}_5 $ and $A_5$.
    The x-axis shows sequence lengths (2 to 100), and the y-axis shows sequence accuracy. 
    Each line represents a different model (see legend). Dots indicate in-distribution performance, while triangles indicate out-of-distribution performance (group elements sampled from the full set, including mixed sequences not encountered during training, though within the same length range).}
    \label{fig:model_accuracy}
\end{figure*}

Besides $\tt{Transformer}_{+CoT}$, there are various theoretical works \citep{zhang2024autoregressivechainthought, fan2024loopedtransformerslengthgeneralization, 9878733, fan2021addressinglimitationstransformersfeedback} attempting to inject recurrence into transformers, while another line of work \citep{goyal2024thinkspeaktraininglanguage, hao2024traininglargelanguagemodels} proposing modifications of chain-of-thought (implicit chain-of-thought in contrast to explicit chain-of-thought). 
In this section, we will explore the state tracking capability with an empirical lens: \textit{can transformers with/without CoT, and their variants, learn state tracking?}

\paragraph{Datasets: $A_5,A_4 \times \mathbb{Z}_5, \mathbb{Z}_{60}.$}

Following the formal definition of state tracking in \citep{merrill2024illusionstatestatespacemodels, grazzi2024unlockingstatetrackinglinearrnns}, we model state tracking as word problems, and consider three kinds of groups with increasing difficulty: $\mathbb{Z}_{60}$, an abelian group encoding mod-60 addition, $A_4 \times \mathbb{Z}_5$, a non-abelian but solvable group\footnote{Formally, a finite group $G$ is solvable if it has a subnormal series $1=G_0 < G_1 < \ldots < G_k=G$ such that each factor $G_i/G_{i-1}$ is abelian.}, which is a direct product group of one alternating group $A_4$ (a subgroup of the symmetric group $S_4$ containing only even permutations) and one cyclic group $\mathbb{Z}_5$, and $A_5$, the alternating group on five elements, which is the smallest non-solvable subgroup. 
With the same number of elements 60, the three groups belong to $\mathsf{TC}^0$, $\mathsf{TC}^0$, and $\mathsf{NC}^1$-complete respectively, with varying difficulty mainly deriving from the complexity of learning the group multiplication operation.

\paragraph{Architectures.}

We denote $\tt{Transformer}$ as a GPT2 architecture \citep{radford2019language} with a bounded number of layers. When augmented with chain-of-thought, it is denoted as $\tt{Transformer}_{+CoT}$. 
To compare with other models, we also consider recurrent neural networks (RNNs \citealp{10.5555/553011} and LSTMs \citealp{6795963}), S4 \citep{gu2022efficientlymodelinglongsequences}, Mamba \citep{gu2024mambalineartimesequencemodeling}, implicit chain-of-thought: transformers with pause (denoted as Pause) \citep{goyal2024thinkspeaktraininglanguage} and other variants of transformers: standard recurrent transformer (denoted as Recurrent) \citep{9878733}, looped transformer (denoted as Looped) \citep{fan2024loopedtransformerslengthgeneralization}. 

\paragraph{Task Formulation.}

In order to disentangle recurrence introduced by CoT from those arising from architectural modifications (\textit{e.g.}, the standard recurrent transformer, where recurrence is inherent to the model architecture), we employ two different task formulations for evaluation: the Token-Tagging (TT) task and the Language Modeling (LM) task. 
In the TT task, each input token $m_i$ is annotated with its corresponding world state $q_i$. 
In contrast, the LM task requires the model to autoregressively generate the sequence of state transitions $(q_1 \ldots q_n)$.
Specifically, only $\tt{Transformer}_{+CoT}$ and implicit CoT (Pause) are evaluated using the LM task, whereas other models and variants the TT task.
The reason of the difference is that, we use the same set of labels whatever the task is, so as to eliminate the label-related influences on supervised training—similar to the ``hint'' setting in \citep{li2024chainthoughtempowerstransformers}.

\paragraph{Experimental Setup.}

We utilize the Python package \texttt{Abstract Algebra}\footnote{\url{https://github.com/alreich/abstract_algebra}} to randomly generate three distinct types of group-based datasets. 
Each dataset consists of input sequences $(m_1 \ldots m_n)$ and corresponding state sequences $(q_1 \ldots q_n)$, where state transitions follow the multiplication operations of three different algebraic groups: $A_5, A_4 \times \mathbb{Z}_5, \mathbb{Z}_{60}$.
For comparison, all models are configured with a single layer and a model dimension of 512. 
Each model is trained on 1,000,000 sampled sequences of length $n$, for successively larger values of $n$, and evaluated the sequence accuracy on a held-out validation set.  
We set a maximum of 500 training epochs and employ early stopping once the validation accuracy reaches 99\%.

\paragraph{In-Distribution Performance.} \label{Performance in Distribution}

We hold a comprehensive evaluation of models' state tracking capability on word problems, with the same model depth and dimension. 
Figure~\ref{fig:model_accuracy} gives different models' performance across different input sequence length and different groups. 
We draw several conclusions:

\begin{enumerate}[noitemsep]
    \item Consistent with theoretical study \citep{liu2023transformerslearnshortcutsautomata, merrill2024illusionstatestatespacemodels}, $\tt{Transformer}$ and state-space models, such as S4 and Mamba, are incapable of expressing arbitrary length $A_5$ word problems, in contrast to RNN and LSTM.
    \item CoT definitely extends the expressive power of $\tt{Transformer}$. 
    In contrast to Pause, which fails with longer sequences, $\tt{Transformer}_{+CoT}$ can efficiently learn word problems of arbitrary length across multiple groups.
    \item $\tt{Transformer}_{+CoT}$ achieves a dual win in both expressiveness and learnability. 
    Unlike architectural modifications such as Looped or Recurrent, $\tt{Transformer}_{+CoT}$ maintains the original $\tt{Transformer}$ architecture without modification, and thus preserves the inherent advantages of the standard $\tt{Transformer}$, including the parallel training and smoother gradient flow.
    While alternative architectures may postpone accuracy degradation on longer input sequences compared to the standard $\tt{Transformer}$, they ultimately fail to converge on sequences of arbitrary length.
\end{enumerate}

\paragraph{Out-of-Distribution Performance.} \label{subsec: ood}

\citet{liu2023transformerslearnshortcutsautomata} analyze transformers' failure in out-of-distribution of parity, and argue that transformers learn a shortcut solution, which compute the parity by counting the number of 1s in the input sequences and compute mod-$2$, thus failing to generalize to sequences with unseen sum of 1s. 
\citet{zhang2024autoregressivechainthought} discuss the role of chain-of-thought in computability, and point that CoT simulates the recurrent connection by iteratively encode and decode back and forth between states and tokens. 
The expressiveness of $\tt{Transformer}_{+CoT}$ in state tracking stems from its ability to retrieve prior computations by appending the previous step's state to the end of the scratchpad.

To investigate whether $\tt{Transformer}_{+CoT}$ learns an algorithm based solely on the input sequences $(m_1 \ldots m_n)$ or combines input and scratchpad as theoretical work expects, we divide the elements of the three groups into proper subsets.
And we train the model on sequences with $m$ belonging to one proper subset, but evaluate on the full set. 
We stress that, through restricting the input sequences into separate subsets, the possible state sequences remain the same, for the reason that any subset can express the whole group through group operation.
If the model learns a shortcut solution, that attends to only input, it can not generalize to sequences with group elements sampled from the full set, because the model has not seen mixed input sequences in the training set. 
As previously outlined, both LSTM and $\tt{Transformer}_{+CoT}$ successfully learn all word problems in-distribution, and we test their performance out-of-distribution. 
Results in Figure~\ref{fig:model_accuracy} show that $\tt{Transformer}_{+CoT}$ achieves perfect generalization on three groups $\mathbb{Z}_{60}, A_4 \times \mathbb{Z}_5, A_5$ in contrast to LSTM, implying that the model attends to not only input but also scratchpad.
We provide more experimental settings in Appendix~\ref{groups}. 
The out-of-distribution performance eliminates the possibility that the model learns specific shortcuts similar to those \citet{liu2023transformerslearnshortcutsautomata} find on $\mathbb{Z}_2$ group, but \textit{this does not rule out the possibility that other shortcuts exist, which necessitates an interpretation on the mechanism.}

\section{Mechanism Inside Transformers with Chain-of-Thought: FSAs} \label{sec:mechanism}

Both transformers with chain-of-thought and recurrent neural networks achieve perfect performance in-distribution, and the former even generalizes well out-of-distribution. 
Due to the transformer's black-box nature, the mechanism behind its state tracking implementation remains unknown. Consequently, we cannot answer questions such as what algorithm the model has learned to keep track of the world state, and how well the model can generalize. 
Considering that the word problems involve a series of state transitions, the model's state computation is dynamic and sequential while generating intermediate steps. 
In this section, we first try to analyze the circuit used by transformers with chain-of-thought to keep track of the world state, and then conduct a deeper component analysis to interpret the mechanism.

\subsection{Circuit Localization} \label{subsec: circuit}

To understand the mechanism of state tracking, we will first localize the circuit (a subset of components) using activation patching \citep{10.5555/3495724.3496763}. 
We formalize each word problem as a prompt $\mathbf{p_i} = (m_1 \ldots m_n | q_1 \ldots q_{i-1})$, where $(m_1 \ldots m_n)$ is the problem description and $q_i$ is the resulting state token at $i\text{-th}$ step.\footnote{In this paper, the scratchpad step index starts at 0.}
At each intermediate step, we sample a prompt $\mathbf{p_i}$ with its corresponding state token $q_i$, and then sample a counterfactual prompt $\mathbf{p_i'}$ that yields a different state token $q_i'$.
We perform intervention experiments by replacing the activation of a specific MLP layer or attention head with the pre-computed activation from $\mathbf{p_i'}$, and then assess how this affects the probabilities of answer tokens.
To assess the importance of a component at $i\text{-th}$ step, we compute the following intervention effect (IE) metric \citep{nikankin2024arithmeticalgorithmslanguagemodels}, averaged over all prompts. 
The IE measures the mean relative change in the model’s output probabilities for both $q_i$ and $q_i'$:
\[
    \text{IE}(q_i, q_i') = 
    \frac{1}{2} \left[ 
        \frac{\text{P}^*(q_i') - \text{P}(q_i')}{\text{P}(q_i')} 
        + \frac{\text{P}(q_i) - \text{P}^*(q_i)}{\text{P}^*(q_i)} 
    \right]
\]
where $\text{P}(q)$ and $\text{P}^*(q)$ denote the model's predicted probability for token $q$ before and after the intervention, respectively.

We localize the circuit in $\tt{Transformer}_{+CoT}$ responsible for tracking the world state at each intermediate step in $A_5$ word problems.
As shown in Figure~\ref{fig:ie&probe}, the circuit responsible for state tracking primarily consists of MLPs rather than attention heads.
Moreover, the circuit remains largely consistent across steps: MLP0 \citep{mcdougall2023copysuppressioncomprehensivelyunderstanding} at the position of $m_i$ and late-layer MLPs at the last position consistently play a key role in state tracking at the $i\text{-th}$ step.
Overall, given input sequences $(m_1 \ldots m_n) \in M^*$ concatenated with scratchpad tokens $(q_1 \ldots q_{i-1}) \in M^*$, it is primarily the MLPs—particularly those in the later layers—that implement the state transitions.

\begin{figure} [ht]
    \centering
    \includegraphics[width=\columnwidth]{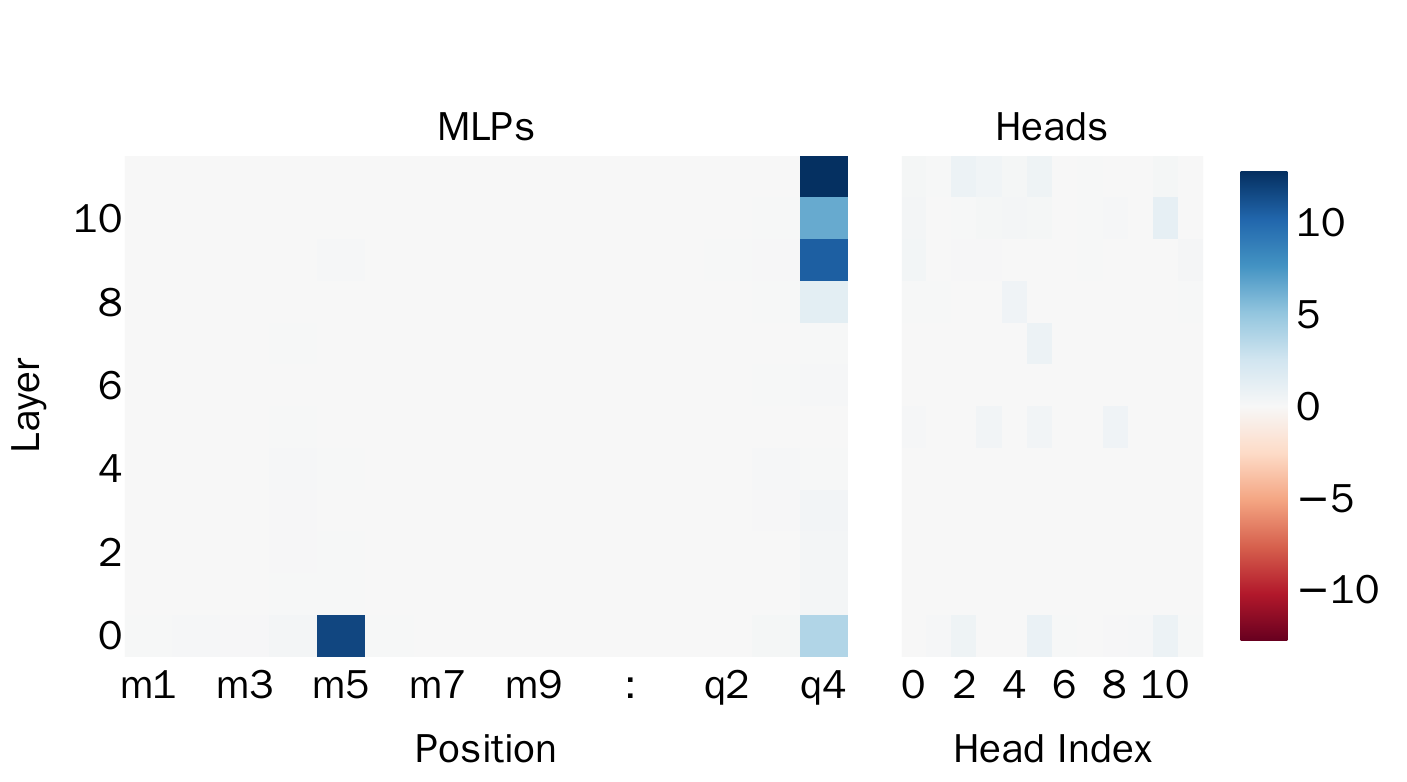}
    \caption{Activation patching results for $A_5$ word problems of length 10 at $4\text{-th}$ step, with the prompt $\mathbf{p_5} = (m_1 \ldots m_{10} | q_1 \ldots q_4)$ and the label $q_5$.
    The color intensity reflects the magnitude of the IE value. 
    The left and right subfigures represent MLPs and attention heads, respectively, emphasizing that the circuit is primarily composed of MLP0 and late-layer MLPs.}
    \label{fig:ie&probe}
\end{figure}


\subsection{MLP Neuron Analysis} \label{subsec: MLP Neuron Analysis}

Having identified the circuit, we then hold deeper component analysis of how late-layer MLPs implement state tracking. 
We begin by grouping all possible prompts into distinct subsets based on their resulting states.
Specifically, any prompt $(m_1 \ldots m_n | q_1 \ldots q_{i-1})$ at $i\text{-th}$ step belongs to the same subset if it reduces to the same state $q_i$ under the group operation.
For simplicity, we refer to such a subset as the $\mathcal{Q}$ prompt subset, where $\mathcal{Q}$ denotes the resulting state.
To quantify the contributions of the MLPs, we compute the contribution of each late-layer MLP to promoting $\mathcal{Q}$ in the residual stream using the logit lens \citep{geva2021transformerfeedforwardlayerskeyvalue}, evaluated across all $\mathcal{Q}$ prompt subsets.
As shown in Figure~\ref{fig:increase}, the last three layers of MLP play a significant role, with MLP11 primarily driving the state transition, contributing an average of 72\% to the logit increase.

\begin{figure}[ht]
  \includegraphics[width=\columnwidth]{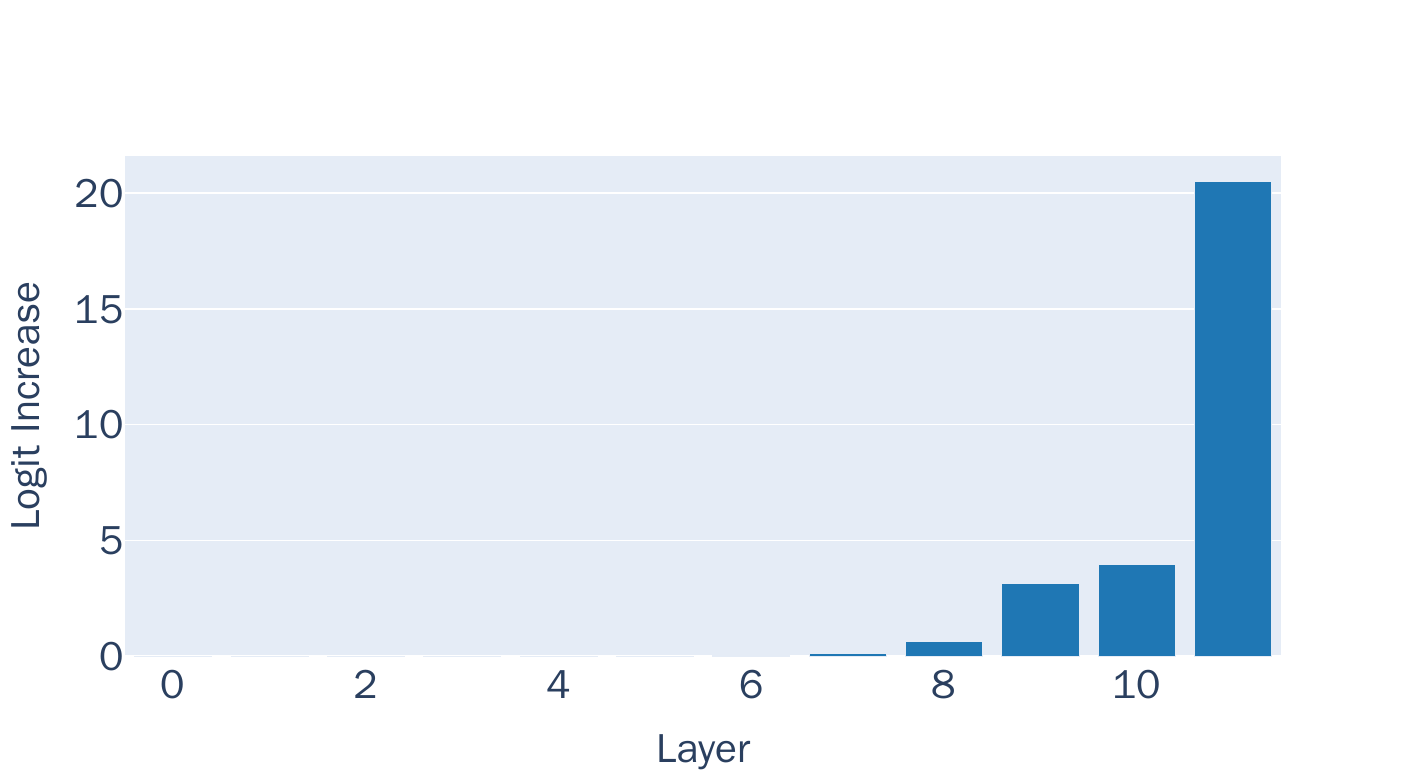}
  \caption{
  Average logit increase in the layer representation for different MLPs.
  The final three layers of the MLP primarily contribute to promoting the resulting state $\mathcal{Q}$, with MLP11 contributing the most.
  }
  \label{fig:increase}
\end{figure}



We distinguish prompts $(m_1 \ldots m_n | q_1 \ldots q_{i-1})$ according to the tuple $(m_i,q_{i-1})$\footnote{According to the transition rules, the resulting state $q_i$ is determined by the input $m_i$ and the preceding state $q_{i-1}$. Hence, each prompt can be represented as a tuple $(m_i,q_{i-1})$.}, referred to as $(m, q)$ pairs for brevity. 
We then compute the activation coefficient $m_j^l$ of the $j\text{-th}$ MLP neuron in layer $l$ across all such pairs.
Late-layer MLP neurons are selectively activated by a specific subset of $(m, q)$ pairs. Specifically, the top 60 most-activated $(m, q)$ pairs for a given neuron are distributed along individual rows or columns in the two-dimensional $(m, q)$ grid. Notably, the transition rules dictate that exactly 60 $(m, q)$ pairs reduce to the state $\mathcal{Q}$, and these pairs are similarly distributed across rows or columns in the grid. This observation prompts the following research questions:
\begin{itemize}
\item \textbf{Q1:} Do the observed activation patterns indicate a group of MLP neurons that activated by one $\mathcal{Q}$ prompt subset, where the $(m, q)$ pairs reduce to the same state $\mathcal{Q}$?
\item \textbf{Q2:} Is it possible to classify MLP neurons based on the resulting states to which their activations correspond?
\end{itemize}

To address these research questions, we conduct neuron classification experiments (see Appendix~\ref{appendix algorithm}) to categorize all MLP neurons at layer $l$. 
Our findings reveal that neurons in layers MLP11, MLP10, and MLP9 can be mapped to specific states with success rates of 90.0\%, 79.6\%, and 38.7\%, respectively. 
Notably, approximately 15.5\% of MLP11 neurons achieve a perfect F1 score relative to the ground truth subset of $(m, q)$ pairs, suggesting an N-to-1 mapping between neurons and states.

For Q1, most neurons exhibit high F1 scores with respect to the ground truth subset, indicating that these neurons are selectively activated by the $\mathcal{Q}$ prompt subset.  
For Q2, we systematically classify activated late-layer MLP neurons across intermediate steps, achieving high success rates. 
Notably, we observe that 15.5\% of MLP11 neurons exhibit an N-to-1 mapping to specific states.
These results suggest that transformers with chain-of-thought primarily keep track of the world state via the late-layer MLP neurons. 
These neurons exhibit selective activation in response to specific subsets of $(m, q)$ pairs and contribute to promoting the correct state token $\mathcal{Q}$ into the residual stream.  
Furthermore, accurate state transitions are achieved through the collective sub-updates of multiple neurons associated with the target state $\mathcal{Q}$.
Beyond the late-layer MLPs, we also analyze another critical component of the circuit: MLP0, which primarily achieves effective embedding of the input $m_i$, facilitating subsequent state transitions. 
Further details are provided in Appendix~\ref{appendix mlp0}.

\subsection{Transformers with Chain-of-Thought Recover FSAs } \label{subsec: automata}

Near-perfect next-token prediction does not guarantee that a generative model has fully reconstructed a world model.
For instance, in the cumulative Connect-4 task, a generative model with uniform probability can still achieve near-perfect next-token prediction \citep{vafa2024evaluatingworldmodelimplicit}.
To address this, we focus on state tracking, where world model FSAs can effectively compress distinct sequences converging to the same state while distinguishing sequences leading to different states, regardless of their superficial similarity. This subsection explores how the generative model $\tt{Transformer}_{+CoT}$, which achieves near-perfect sequence accuracy on word problems, recovers FSAs by analyzing its internal structure at the granularity of MLP neurons.

Our analysis builds on findings from Section~\ref{subsec: MLP Neuron Analysis}, which suggest that $\tt{Transformer}_{+CoT}$ performs state tracking by activating specific groups of neurons.
To evaluate the model’s ability to recover FSAs, we examine its sequence compression and distinction capabilities. For compression, we assess whether the model activates the same set of neurons for prompts converging to the same state. For distinction, we verify whether the model activates distinct neuron sets for prompts leading to different states. To quantify these properties, we propose two metrics. For two prompts, $\mathbf{p_i}$ and $\mathbf{p_i'}$, that converge to the same state, we define the compression metric as:
\[
\text{Compression} = \frac{ |\text{N}_{\mathbf{p_i}} \cap \text{N}_{\mathbf{p_i'}}| }{|\text{N}_{\mathbf{p_i}} \cup \text{N}_{\mathbf{p_i'}}|}
\]
where $\text{N}_{\mathbf{p_i}}$ and $\text{N}_{\mathbf{p_i'}}$ denote the sets of neurons activated by prompts $\mathbf{p_i}$ and $\mathbf{p_i'}$, respectively.
Conversely, for prompts $\mathbf{p_i}$ and $\mathbf{p_i'}$ leading to distinct states, we define the distinction metric as:
\[
\text{Distinction} = 1 - \frac{ |\text{N}_{\mathbf{p_i}} \cap \text{N}_{\mathbf{p_i'}}| }{|\text{N}_{\mathbf{p_i}} \cup \text{N}_{\mathbf{p_i'}}|}
\]

We calculate compression and distinction metrics pairwise for all prompts, averaging them at each intermediate step.
Our results demonstrate that $\tt{Transformer}_{+CoT}$ effectively compresses prompts sampled from the $\mathcal{Q}$ prompt subset while distinguishing prompts from different subsets, achieving near-perfect world model recovery with an average metric score of 0.991.
This indicates efficient capture of the underlying FSA structure.
Extending our analysis beyond the group $A_5$, we identify similar FSAs for $\mathbb{Z}_{60}$ and $A_4 \times \mathbb{Z}_5$, as well as in larger-scale models, with detailed results in Appendix~\ref{appendix other groups}. 
Collectively, we conclude that \textbf{Transformers with Chain-of-Thought recover FSAs on state tracking, even at the granularity of MLP neurons}.

\section{Robustness and Generalizability of FSAs} \label{sec: robustness}

Sections~\ref{sec:model performance} and~\ref{sec:mechanism} have demonstrated positive results in the recovery of FSAs.
However, the word problems considered are too idealistic, with sequence length controlled and noise excluded.
In this section, we will investigate the robustness and generalizability of the FSAs within $\tt{Transformer}_{+CoT}$, considering intermediate step skipping, scratchpad noise, and length generalization.
We find that refining the training data distribution can steer the model toward better adaptation to the aforementioned scenarios, except for length generalization.
Addressing the latter may require exploring alternative approaches, such as modifying the model architecture, improving training strategies, etc.

\subsection{Skipping Steps} \label{Skipping}

In practical training datasets, such as OpenWebMath\footnote{\url{https://github.com/keirp/OpenWebMath}}, reasoning trajectories include implicit or skipped intermediate steps. 
For example, the length of $(q_1 \ldots q_n)$ may be less than $n$, with some intermediate steps skipped.
In our experiment settup, we introduce a skipping probability to omit each intermediate state \( q_i \) (except \( q_n \)) and train $\tt{Transformer}_{+CoT}$ on this dataset. 
To investigate the model’s ability to perform intermediate step jumps, we employ a linear classifier \citep{gurnee2023findingneuronshaystackcase} to probe the states embedded in the residual stream at the last position.
Specifically, we probe the token $q_i$ at the last position, given the prompt $(m_1 \ldots m_n | q_1 \ldots q_{i-1})$.
From the results in Figure~\ref{fig:multi_step_probe}, we can find that $q_i$, $q_{i+1}$, and $q_{i+2}$ are all embedded in the residual stream, suggesting that the model has learned single-step reasoning, two-step reasoning (skipping one step), as well as three-step reasoning (skipping two steps).
Additionally, we propose possible mechanisms for skipping and design experiments for further analysis, as detailed in appendix~\ref{appendix skipping}.
Finally, we conclude that incorporating skipping examples into the training set facilitates the adaptation of the FSAs within $\tt{Transformer}_{+CoT}$ to the scenario.

\begin{figure}[ht]
  \includegraphics[width=\columnwidth]{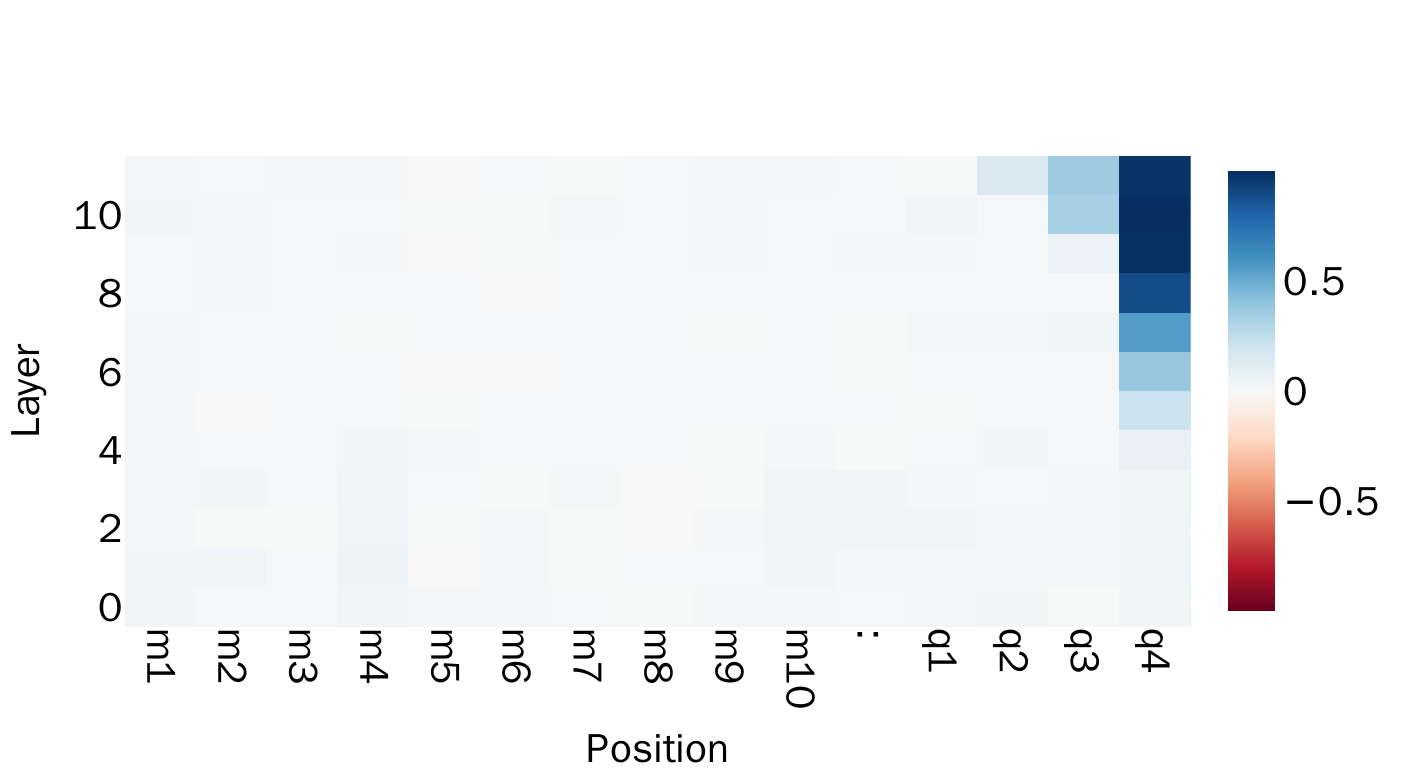}
  \caption{
  Probing results for state $q_i$ at $4\text{-th}$ step.
  We can find the positive results appearing two positions earlier, demonstrating that the model has learned skip-step reasoning, including the ability to skip one or two intermediate steps.
  }
  \label{fig:multi_step_probe}
\end{figure}

\subsection{Noise in Scratchpad} \label{subsec: noise}

We analyze the impact of injecting noise into the scratchpad states $(q_1 \ldots q_{i-1})$, where at least one state is false. 
We categorize noise into two types: (1) any false state except $q_{i-1}$, and (2) a false $q_{i-1}$. 
We evaluate the accuracy of model's next-token prediction under these conditions.
For the first type of noise, where noise affects any state other than $q_{i-1}$, the model maintains near-perfect next-token prediction accuracy. 
The reason is obvious: the model depends primarily on the input $m_i$ and the preceding state $q_{i-1}$ for the state transition at the $i$-th step, while disregarding intermediate states prior to $q_{i-1}$.
For the second type of noise, where $q_{i-1}$ is false, the model’s performance declines significantly, with next-token prediction accuracy falling to approximately 0.017.


Here, we aim to address the scenario by adjusting the dataset distribution.
Specifically, we train the model using data containing the second type of noise in the scratchpad states and achieve a substantial improvement in robustness: the accuracy increases from 0.017 to 0.896.
To investigate whether the model accurately recovers the correct state tracking from noise, we employ a linear classifier \citep{gurnee2023findingneuronshaystackcase} at the final position.
Specifically, given a corrupted prompt $(m_1 \ldots m_n | q_1 \ldots \hat{q}_{i-1})$ (where $\hat{q}_{i-1}$ denotes an incorrect state), we probe the correct next state $q_{i}$, the false previous state $\hat{q}_{i-1}$ and the true previous state $q_{i-1}$.
We find that the activation sustains positive probing results for $\hat{q}_{i-1}$ across layers until the final one, where it demonstrates a high average accuracy of 0.943 for $q_{i}$ and 0.379 for $q_{i-1}$.
In autoregressive models, the residual stream at the preceding position of $q_{i-2}$ in the sequence $(m_1 \ldots m_n | q_1 \ldots \hat{q}_{i-1})$ encodes the correct input state $q_{i-1}$.
Based on this, we propose a possible mechanism that the model's attention heads attend to the activation at the preceding position, thus retrieving the correct input state $q_{i-1}$ and consequently updating the state to $q_{i}$.
We have also designed experiments to analyze the hypothesis, as detailed in Appendix~\ref{appendix noise}.
We conclude that refining dataset distribution can help in maintaining the accuracy and reliability of the FSAs inside $\tt{Transformer}_{+CoT}$.

\subsection{Length Generalization}

Lastly, we explore length generalization, a key challenge in developing transformer-based LLMs.
To enhance length generalization, we replace GPT2's original absolute positional embedding with NoPE (No Positional Embedding), which has been shown in prior work \citep{10.5555/3666122.3667204} to outperform other explicit positional embedding methods, including both absolute and relative approaches.
Notably, in our experiments, NoPE exhibits a certain degree of length generalization, whereas absolute and relative positional embeddings show negligible generalization.
We choose LSTM for comparison for the reason that it efficiently learns all investigated word problems on sequences of arbitrary length. 

Table~\ref{tab:ood} illustrates the length generalization capabilities of various models on unseen sequence lengths during training.
LSTM achieves perfect generalization, while transformers with chain-of-thought exhibit weak performance. 
Analyzing the failure from the perspective of MLP neurons, we identify an intriguing ``U-turn'' phenomenon: the precision of activated neurons, which, after initially decreasing, exhibits an increase during the final few steps of inference.
Further details are provided in Appendix~\ref{appendix length}.
We conclude that improving the length generalization ability of the model requires other approches, such as modifications to model architecture, loss functions, or optimization algorithms.

\begin{table}[ht]
\centering
\resizebox{\columnwidth}{!}{
\begin{tabular}{lccccccc}
\toprule
       & \multicolumn{7}{c}{length} \\
       \cmidrule(lr){2-8} 
Models & 20 & 21 & 22 & 23 & 24 & 25 & 30 \\
\midrule
LSTM & 1.00 & 1.00 & 1.00 & 1.00 & 1.00 & 1.00 & 1.00  \\
$\tt{Transformer}_{+CoT}$ & 0.99 & 0.98 & 0.87 & 0.53 & 0.24 & 0.09 & 0.00 \\
\bottomrule
\end{tabular}
}
\caption{Comparison of length generalization performance between LSTM and $\tt{Transformer}_{+CoT}$. 
Models are trained on sequences of mixed lengths up to 20 and tested on seen (20) and unseen (>20) lengths.}
\label{tab:ood}
\end{table}

\section{Related Work}

Theoretical studies \citep{merrill2024illusionstatestatespacemodels, liu2023transformerslearnshortcutsautomata} integrate circuit complexity and algebraic formal language theory to categorize different models and word problems into distinct computational complexity classes, such as $\mathsf{TC}^0$ and $\mathsf{NC}^1$. In particular, \citet{merrill2024illusionstatestatespacemodels} demonstrate that $\mathsf{NC}^1$-complete word problems cannot be expressed by models within the $\mathsf{TC}^0$ class. In contrast, our work investigates $\tt{Transformer}_{+CoT}$ through the lens of mechanistic interpretability, revealing a dual advantage in expressiveness and learnability, while also uncovering FSA structures within the model.

\citet{vafa2024evaluatingworldmodelimplicit} argue that next-token prediction is a fragile metric for evaluating generative models and propose novel evaluation metrics for world model recovery, drawing inspiration from Myhill-Nerode boundary theory \citep{myhill1957finite, nerode1958linear}. Their metrics—such as distinction recall and precision—evaluate how effectively a model distinguishes between prefixes that lead to different states. In contrast, our proposed metric operates at the level of individual MLP neurons, measuring how the model internally differentiates between input sequences. This offers a more fine-grained, mechanistic perspective on the model’s internal dynamics.

\section{Conclusions}
In this work, we investigate the learnability of $\tt{Transformer}_{+CoT}$ through the lens of mechanistic interpretability in state tracking. We begin by reaffirming the success of $\tt{Transformer}_{+CoT}$ in both in-distribution and out-of-distribution settings. Next, we identify the key components of the circuit, specifically the neurons in the last layers of the MLP, which play a critical role. We find evidence of an implicit FSA within the model, where each state is compressed by a distinct group of neurons. 
Finally, we evaluate the model in three challenging settings and show that the learned FSAs are robust to noise and skipping but struggle with generalization to significantly longer sequences. This suggests the need for architectural or optimization improvements. 
Our findings reveal that $\tt{Transformer}_{+CoT}$ achieves near-perfect performance in next-token prediction while internalizing an FSA-like structured state representation, bridging the gap between expressiveness and learnability. This work provides insights into structured reasoning for sequential tasks.

\section{Acknowledgments}

This research is supported by the National Natural Science Foundation of China under Grant Nos. 62436006, 62192731, 62192730.
Jie Fu is supported by Shanghai Artificial Intelligence Laboratory. 


\section*{Limitations}

In this paper, we define state tracking as word problems involving cyclic and symmetric groups, which form the basis of our conclusions. Our study focuses solely on GPT2 models, leaving the exploration of additional models, such as Llama, for future research. Furthermore, while we have made efforts to address more realistic word problems, state tracking in real-world tasks is intricate, making it challenging to separate state tracing abilities from other skills.

\bibliography{acl_latex}

\appendix

\section{Symbol Description} \label{sec:appendix}

In this section, we briefly introduce the symbols used in this article as in Table~\ref{tab:symbols}.

\begin{table*}
    \centering
    \begin{tabular}{lp{12cm}}
\toprule
\textbf{Symbol} & \textbf{Description} \\ \midrule
$\Sigma$               & Finite set of characters                     \\
$Q$               & Finite set of states in the automaton             \\
$q_0$               & Start state, an element of the state set Q      \\
$\delta$               & State-transition function                  \\
$(M, \cdot)  $             & A monoid                    \\
$M^*$               & Set of all possible sequences from $M$      \\
$e$               & The starting state                    \\
$\mathbb{Z}_k$               & Cyclic group of order $k$             \\ 
$S_k$              & Symmetric group of order $k$           \\ 
$\mathsf{TC}^0$            &  A complexity class in computational complexity theory, consisting of problems solvable by uniform constant-depth threshold circuits with a polynomial number of gates.     \\
$\mathsf{NC}^1$            &  A complexity class containing problems solvable by uniform logarithmic-depth Boolean circuits with a polynomial number of gates and bounded fan-in.     \\
$\mathbb{Z}_{60}$          & An abelian group encoding mod-60 addition      \\
$A_5$ & The alternating group on five elements \\
$A_4 \times \mathbb{Z}_5$ & A non-abelian but solvable group \\
$\mathbf{p_i}$ & Prompt at the $i$\-th step \\
$q_i$ & Resulting state at the $i$\-th step \\
$\mathcal{Q}$ & Resulting state $q_i$ \\
$m_i$ & The $i$\-th input of word problems \\
$q_{i-1}$ & The preceding state of word problems \\
$(m,q)$ & $(m_i, q_{i-1})$ \\
$\hat{q}_i$ & Error state \\
$\mathbf{p_i'}$ & Counterfactual prompt at the $i$\-th step \\
$q_i'$ & Counterfactual result state at the $i$\-th step \\
$\text{P}$ & Pre-intervention probability distribution\\
$\text{P}^*$ & Post-intervention probability distribution\\
$\mathbf{x^l}$ & Input at layer $l$\\
$\mathbf{K^l}$ & The weight matrix at layer $l$\\
$\mathbf{V^l}$ & The weight matrix at layer $l$ \\
$d_{\text{mlp}}$ & the MLP intermediate dimension \\
$d_{\text{m}}$ & the model dimension \\
$f$ & Non-linear activation function \\
$\mathbf{k^l_j}$ & the $j\text{-th}$ row vectors of $\mathbf{K^l}$ \\
$\mathbf{v^l_j}$ & the $j\text{-th}$ row vectors of $\mathbf{V^l}$ \\
$m_{j}^l$ & Activation coefficient at the $j$\-th neuron \\
$\text{N}_{\mathbf{p_i}}$ & Set of activated neurons on prompt $\mathbf{p_i}$ \\
$\text{N}_{\mathbf{p_i'}}$ & Set of activated neurons on prompt $\mathbf{p_i'}$ \\
\bottomrule
\end{tabular}
\caption{Symbols used in this paper}
\label{tab:symbols}
\end{table*}

\section{Separating Elements in Groups} \label{groups}

In Section~\ref{subsec: ood}, we separate all $m$ into proper subsets to explore the models' performance.
As Table~\ref{tab: division} shows, we divide $m$ in $\mathbb{Z}_{60}$, $A4 \times \mathbb{Z}_5$ and $A_5$ according to values, orders and permutation types, respectively.
We then prove that the division is reasonable, for the reason that every proper subset can generate the whole group.

\begin{theorem}
    Every separated proper subset can generate the whole group under group operation.
\end{theorem}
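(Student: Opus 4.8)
The plan is to prove the statement separately for each of the three group/subset divisions described in Table~\ref{tab: division}, since the notion of "proper subset" used is concrete in each case (values for $\mathbb{Z}_{60}$, element orders for $A_4 \times \mathbb{Z}_5$, permutation cycle types for $A_5$). In each case I would show that the subgroup $\langle S \rangle$ generated by a given proper subset $S$ is all of $G$, using the standard fact that a subgroup is the whole group iff it is not contained in any maximal (proper) subgroup, or more directly by exhibiting generators of $G$ inside $\langle S\rangle$.

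First, for $\mathbb{Z}_{60}$: any nonempty subset $S$ containing an element $m$ with $\gcd(m,60)=1$ already generates the whole cyclic group, and more generally $\langle S\rangle = \langle \gcd(\{m : m\in S\}, 60)\rangle$, so I just need to check that each subset in the chosen division has elements whose overall gcd with $60$ is $1$. Second, for $A_4\times\mathbb{Z}_5$: I would use that the $\mathbb{Z}_5$ factor is generated by any nonzero element, and $A_4$ is generated by its $3$-cycles (indeed by any two $3$-cycles that are not inverse to each other, or by a $3$-cycle together with a double transposition); then I check that a subset defined by a set of orders contains enough elements to recover both a generator of $\mathbb{Z}_5$ and a generating set of $A_4$ — here I would use the order structure of $A_4\times\mathbb{Z}_5$ (elements of order $3,5,15,2,6,10$ etc.) to argue each proper order-subset projects onto generating sets in both factors. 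Third, for $A_5$: I would invoke the simplicity of $A_5$ — the normal closure of any nontrivial element is all of $A_5$ — but since $\langle S\rangle$ need not be normal a priori, the cleaner route is to use the classification of proper subgroups of $A_5$ (up to conjugacy: $A_4$, $D_{10}$, $S_3\cong D_6$, and their subgroups) together with the observation that no single one of these contains elements of all the cycle types present in $S$; e.g. a subset containing a $5$-cycle and a $3$-cycle cannot lie in $A_4$ (no order-$5$ element), $D_{10}$ (no order-$3$ element), or $S_3$ (no order-$5$ element), so $\langle S\rangle = A_5$.

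The key steps, in order: (1) recall the explicit division of each group into proper subsets from Table~\ref{tab: division}; (2) for $\mathbb{Z}_{60}$, reduce to a gcd computation; (3) for $A_4\times\mathbb{Z}_5$, reduce to checking the projections to $A_4$ and to $\mathbb{Z}_5$ are surjective, using known generating sets of $A_4$; (4) for $A_5$, use the list of maximal subgroups of $A_5$ and a cycle-type argument to rule out $\langle S\rangle$ being proper. I expect the main obstacle to be the $A_4\times\mathbb{Z}_5$ case: one must be careful that a subset defined purely by element orders genuinely contains an element projecting to a generator of $\mathbb{Z}_5$ \emph{and} elements whose $A_4$-projections generate $A_4$ (a single order, say order $2$, gives only double transpositions which generate the Klein four-group $V_4 \lhd A_4$, not $A_4$ itself), so the proof must rely on the specific orders chosen in the division rather than an arbitrary one — I would want to double-check that the chosen division avoids exactly such degenerate single-order subsets, or else that each listed subset mixes orders appropriately.
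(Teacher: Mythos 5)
Your overall plan — case analysis by group, reducing to gcd for $\mathbb{Z}_{60}$, surjective projections onto factors for $A_4 \times \mathbb{Z}_5$, and ruling out containment in a proper subgroup for $A_5$ — is the same approach the paper takes, just considerably more rigorous. The paper's proof is terse and has small gaps that your version would close: for $\mathbb{Z}_{60}$ the paper asserts ``every element is a generator,'' which is false (only elements coprime to $60$ are), whereas your gcd check is the right statement; for $A_4 \times \mathbb{Z}_5$ the paper addresses only the order-$15$ subset and appeals to a ``generation theorem of direct product groups'' without noting that surjective projections force $\langle S\rangle = G\times H$ only because $|A_4|=12$ and $|\mathbb{Z}_5|=5$ are coprime (via Goursat's lemma) — you correctly flag that one must verify both factor projections actually generate; and for $A_5$ the paper only asserts that $3$-cycles and $5$-cycles generate, omitting the third subset (identity together with double transpositions).

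One detail in your $A_5$ step needs adjusting. You illustrate the maximal-subgroup argument with ``a subset containing a $5$-cycle and a $3$-cycle,'' but the paper's actual subsets are pure cycle types: all $3$-cycles, all $5$-cycles, and identity plus all double transpositions. Your argument still goes through, but the cleanest route given those subsets is either (i) a cardinality count — each subset is a full conjugacy class union with $20$, $24$, or $16$ elements respectively, while the maximal subgroups $A_4$, $D_{10}$, $S_3$ have orders $12$, $10$, $6$, so $\langle S\rangle$ cannot be contained in any of them — or (ii) note that each $S$ is closed under $A_5$-conjugation, so $\langle S\rangle$ is a nontrivial normal subgroup, and simplicity of $A_5$ finishes it. Either is a one-line fix; with that change your sketch is a complete and more careful version of the paper's proof.
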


\begin{proof}

    For Cyclic group, every element is a generator, so that proper subsets can generate \( h \in \mathbb{Z}_{60} \).
    
    For alternating group, the 3-cycle and 5-cycle in \( A_5 \) can both generate the group.

    For direct product group, we continue the proof as follows:

    The elements of order 15 in the direct product group \( A_4 \times \mathbb{Z}_5 \) are those elements of the form \( (g, h) \), where \( g \in A_4 \) is a 3-cycle (order 3) and \( h \in \mathbb{Z}_5 \) is a nonzero element (order 5).
    Although these elements of order 15 form a proper subset of the entire group, their projections onto the factors \( A_4 \) and \( \mathbb{Z}_5 \) separately generate the full groups:
    \begin{itemize}
        \item The 3-cycle in \( A_4 \) can generate \( A_4 \) (since \( A_4 \) is generated by its 3-cycles).
        \item Any nonzero element in \( \mathbb{Z}_5 \) is a generator, thus can generate the entire \( \mathbb{Z}_5 \).
    \end{itemize}
    By the generation theorem of direct product groups, if a subset has surjective projections onto each factor, then the group it generates must be the entire direct product.
    Thus, all elements of order 15 can generate the entire group \( A_4 \times \mathbb{Z}_5 \).
\end{proof}

We train the model with sequences sampled from one subset, and evaluate on sequences with $m$ sampled from the group.
We emphasize that although the model did not encounter mixed sequences during training, the proper subset is able to generate the entire group. 
Therefore, all possible state transitions are included in the data.
$\tt{Transformer}_{+CoT}$ success in out-of-distribution shows that the learned algorithm does not rely on finding a certain pattern in the input sequence.

Surprisingly, LSTM fails to generalize in the task.
We hypothesize that the disparity in out-of-distribution performance stems from the difference in task formulation: 
the LM task can integrate all elements of the group into the sequence by concatenating each possible state after the input sequence.

\begin{table*}
  \centering
  \begin{tabular}{lp{10cm}}  
    \hline
    \textbf{Group} & \textbf{Proper Subsets} \\
    \hline
    $A_{5}$ & \textbf{Identity and Double Transpositions:} \\  
            & 0, 3, 8, 11, 12, 13, 14, 27, 30, 33, 41, 43, 47, 53, 55, 59 \tabularnewline
            & \textbf{3-Cycles:} \\  
            & 1, 2, 4, 5, 6, 7, 9, 10, 15, 19, 22, 24, 28, 29, 37, 39, 40, 49, 51, 52 \tabularnewline
            & \textbf{5-Cycles:} \\  
            & 16, 17, 18, 20, 21, 23, 25, 26, 31, 32, 34, 35, 36, 38, 42, 44, 45, 46, 48, 50, 54, 56, 57, 58 \\
    \hline
    $A4 \times \mathbb{Z}_5$ & \textbf{Order 15:} \\  
            & 6, 7, 8, 9, 11, 12, 13, 14, 21, 22, 23, 24, 26, 27, 28, 29, 31, 32, 33, 34, 36, 37, 38, 39, 46, 47, 48, 49, 51, 52, 53, 54 \tabularnewline
            & \textbf{Other Orders:} \\  
            & 0, 15, 40, 55, 5, 10, 20, 25, 30, 35, 45, 50, 1, 2, 3, 4, 16, 17, 18, 19, 41, 42, 43, 44, 56, 57, 58, 59 \\
    \hline
    $\mathbb{Z}_{60}$ & \textbf{$<30$}: \\  
            & 0, 1, 2, 3, 4, 5, 6, 7, 8, 9, 10, 11, 12, 13, 14, 15, 16, 17, 18, 19, 20, 21, 22, 23, 24, 25, 26, 27, 28, 29 \tabularnewline
            & \textbf{$\ge 30$}: \\  
            & 30, 31, 32, 33, 34, 35, 36, 37, 38, 39, 40, 41, 42, 43, 44, 45, 46, 47, 48, 49, 50, 51, 52, 53, 54, 55, 56, 57, 58, 59 \\
    \hline
  \end{tabular}
  \caption{Division of elements for three groups in out-of-distribution.}
  \label{tab: division}
\end{table*}

\section{More Experimental Details}

In Section~\ref{sec:mechanism}, we train a GPT2-small model with 12 layers, 12 heads, and a hidden dimension of 768 on a synthetic group dataset. Specifically, we use 2 NVIDIA A100 GPUs, employing mixed-precision bfloat16 for improved computational efficiency. For one group experiment, each run takes approximately 50 minutes. We configure the AdamW optimizer with a weight decay of 0.01 and momentum parameters $\beta = (0.9, 0.999)$, alongside a constant learning rate of 0.0001 and a batch size of 512. The GPT2-small model is trained on a mixture of word problems with $n$ ranging from 2 to 20, and optimized via gradient-based methods with a cross-entropy loss function. Early stopping is implemented once the test accuracy reaches 99\%, after which the model is used for MI experiments.

\section{Classification Algorithms} \label{appendix algorithm}

We provide the late-layer MLP neurons classification procedures as follows:

\begin{enumerate}[noitemsep]
    \item Utilize the priori transition rules to pre-compute all ground truth sets of $(m,q)$ pairs. 
    \item Measure all activation coefficient $m_{j}^l$ across all ($m, q$) pairs. 
    \item Utilize the logit lens to calculate the logits of tokens embedded in $\mathbf{v_j^l}$, and convert to a 2D pattern, where the cell in index ($m, q$) is the logit of the result state $\mathcal{Q} = m \cdot q$. 
    \item Multiply the intermediate results of the previous two steps element-wise, resulting in effective logit contribution of the neuron to the corresponding state for each ($m, q$) pair. 
    \item  Extract the top 60 ($m, q$) pairs from the activation pattern as the prediction. 
    \item Compute the F1 scores between prediction and all labels. 
    The neuron can be classified to state with F1 score no less than threshold $\theta=0.2$.\footnote{Here the value of $\theta$ is an empirical setting, significantly higher than the random value of 0.017. See Appendix~\ref{F1} for details on computing the random value.}
\end{enumerate}


\section{Expected F1 Score of Randomly Activated Neurons} \label{F1}

In Section~\ref{subsec: MLP Neuron Analysis}, we empirically set the value of $\theta$ as 0.2 to classify MLP neurons, which is non-trivial compared with random F1 score 0.017.
And we provide detailed calculation process of randomly activated neurons' F1 score as follows.

Let \( S \) be a set containing 3600 elements: \(S = \{0, 1, 2, \ldots, 3599\}\), and let \( L \) be the label subset consisting of the first 60 elements: \(L = \{0, 1, 2, \ldots, 59\}\).
We randomly sample 60 elements from \( S \) to form a subset \( x \): \(x \subseteq S\) and \(|x| = 60\).
Our goal is to compute the expected values of precision, recall, and F1 score for the subset \( x \) with respect to the label subset \( L \).

Define \(T = |x \cap L|\) as the number of correctly selected elements. 
Since \( x \) is chosen uniformly at random from \( S \), and given that \( L \) contains 60 elements, \( T \) follows a hypergeometric distribution. 
Its expected value is:
\[
\mathbb{E}[T] = \frac{|x| \cdot |L|}{|S|} = \frac{60 \times 60}{3600} = 1.
\]
The precision and recall are defined by
\[
\text{Precision} = \frac{|x \cap L|}{|x|} = \frac{T}{60} 
\]
\[ \text{Recall} = \frac{|x \cap L|}{|L|} = \frac{T}{60}
\]
Thus, their expected values are
\[
\mathbb{E}[\text{Precision}] = \mathbb{E}\left[\frac{T}{60}\right] = \frac{\mathbb{E}[T]}{60} = \frac{1}{60},
\]
\[
\mathbb{E}[\text{Recall}] = \frac{1}{60}.
\]
The F1 score is the harmonic mean of precision and recall:
\[
\text{F1} = \frac{2 \cdot \text{Precision} \cdot \text{Recall}}{\text{Precision} + \text{Recall}}.
\]
Since \(\text{Precision} = \text{Recall} = \frac{T}{60}\), we substitute to obtain
\[
\text{F1} = \frac{2 \cdot \frac{T}{60} \cdot \frac{T}{60}}{\frac{T}{60} + \frac{T}{60}} = \frac{2 \cdot \frac{T^2}{3600}}{\frac{2T}{60}} = \frac{T}{60}.
\]
Hence, the expected F1 score is
\[
\mathbb{E}[\text{F1}] = \mathbb{E}\left[\frac{T}{60}\right] = \frac{1}{60} \approx 0.01667.
\]

\section{Activated Neurons Analysis} \label{precision and recall}

Having classified neurons into states as described in Section~\ref{subsec: MLP Neuron Analysis}, We can compute the precision and recall of the activated neurons, averaged over all $\mathcal{Q}$ prompt subsets, during the model's single-step state transitions.
Specifically, we select the top-$K$ activated neurons based on the activation coefficient $m_{j}^l$ and calculate precision and recall using the classified neurons as ground truth labels.
The results presented in Figure~\ref{fig:p&r} indicate that across intermediate steps, the model activates MLP11 neurons with high average precision (0.797) but relatively low average recall (0.253).
We emphasize that the high precision accounts for the model’s accuracy in solving word problems, whereas the low recall suggests that the neurons activated by a given $\mathcal{Q}$ prompt subset can vary across different steps.

\begin{figure}[ht]
  \includegraphics[width=\columnwidth]{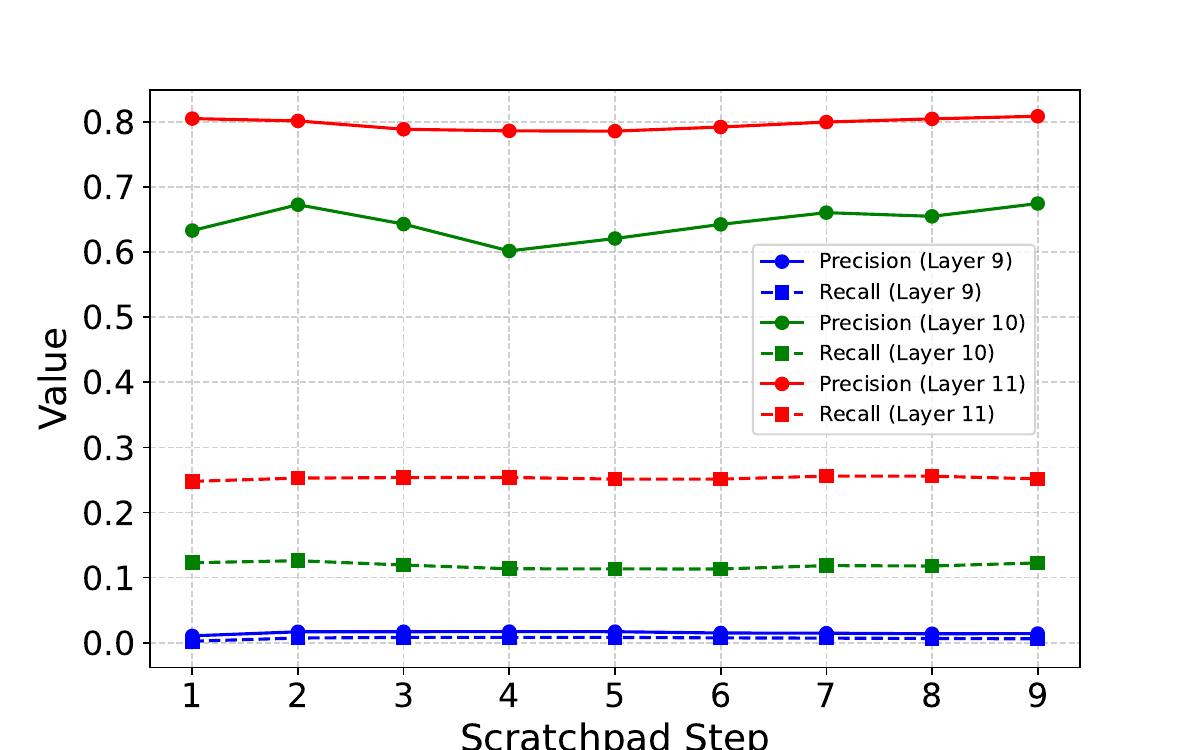}
  \caption{
  Averaged precision and recall of top-$K$ activated neurons across different intermediate steps.
  }
  \label{fig:p&r}
\end{figure}

\section{MLP0 in Circuit} \label{appendix mlp0}

Activation patching results in Section~\ref{subsec: circuit} show that the circuit mainly consists of MLP0 and late-layer MLPs.
Given a prompt $(m_1 \ldots m_n | q_1 \ldots q_{i-1})$, the late-layer MLPs implement state transition in the postion $q_{i-1}$, while the MLP0 mainly achieves effective word embedding at the position of $m_{i}$.
As Figure\ref{fig:mlp0} shows, the MLP0 promotes input $m_i$ into the residual stream, which will then be transferred to the last position of $q_{i-1}$ by attention heads for subsequent state transitions.

\begin{figure}[ht]

\includegraphics[width=\columnwidth]{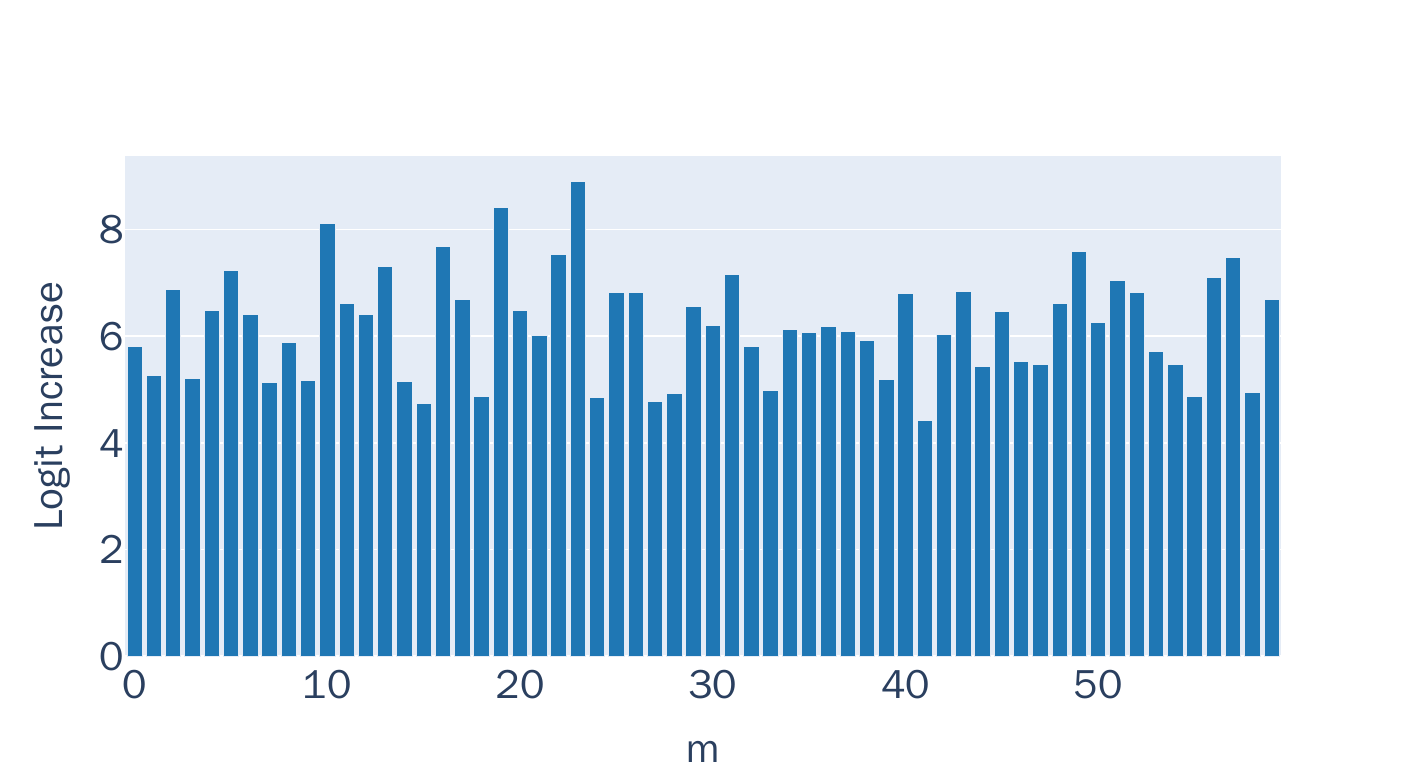}
  \caption{
  Average input $m$ logit increase in the layer representation for MLP0.
  The MLP0 mainly implements effective word embedding in the circuit.
  }
  \label{fig:mlp0}
  
\end{figure}

\section{Attention Heads} \label{appendix attention}

The activation patching results in Section~\ref{subsec: circuit} reveal that the circuit is primarily composed of MLPs, which predominantly encode state transitions. 
Additionally, attention heads within the circuit facilitate the flow of information across positions. 
We identify the principal attention heads using the activation patching results shown in Figure~\ref{fig:ie&probe} and analyze their activation patterns. 
The findings, presented in Figure~\ref{fig: attention}, reveal two distinct attention patterns:
\begin{enumerate}[noitemsep]
\item Across all steps, specific attention heads directly transfer information from the position of $m_i$ to the position of $q_{i-1}$.
\item The model develops a secondary pattern: information from positions $m_1 \ldots m_n$ is relayed backward to the position of ``:'', which separates the input and scratchpad and marks the start of state tracking, thus enabling the model to attend to this relay position during inference.
\end{enumerate}




\section{Automta still exists on other groups with large model scale.}  \label{appendix other groups}

We have found a FSA within GPT2-small on $A_5$ as in Section~\ref{subsec: automata}, in this section, we will extrapolate this conclusion to other groups and larger model size.
We first conduct analogous experiments on $A_4 \times \mathbb{Z}_5$ and $ \mathbb{Z}_{60}$.
We find that MLP11 neurons can still be classified according to transition rules, and the model retains the ability to compress and distinguish different input sequences, achieving average compression and distinction metrics of 0.992 and 0.997, respectively.
Moreover, we explore whether FSAs still exist with model size larger.
Specifically, we repeat the experiments, increasing the model size from GPT2-small (124M) to GPT2-large (744M), and find FSAs still exist.
Results are shown in Figure~\ref{fig: stillexists}.




\section{Noisy State Tracking}

We make an assumption that skipping, noise, and length generalization are present in real-world scenarios.
For example, consider the following example sampled from the OpenWebMath corpus.
The CoT reasoning misses a step in computing the derivative of $2x$ in the second to last line.
Nevertheless, given the vast scale of the pretraining corpus, it is hard to quantify the proportion of noisy reasoning.
Due to the existence of noise and the difficulty from vast corpora, we investigate the robustness of FSAs through controlled experiments.

\begin{quote}
    \textit{What is the derivative of} $f(x) = (e^{2x})(\ln(x))$?\\
    \textit{Mar 3, 2017}\\
    $f'(x) = e^{2x} \left(2 \ln x + \frac{1}{x}\right)$\\
    \textit{Explanation:}\\
    \textit{The derivative of} $\ln x$ \textit{is} $\frac{1}{x}$\\
    \textit{The derivative of} $e^{g(x)}$ \textit{is} $e^{g(x)} \cdot g'(x)$\\
    \textit{The derivative of} $h(x) \cdot l(x)$ \textit{is} $h'(x) \cdot l(x) + h(x) \cdot l'(x)$\\
    \textit{Then}\\
    $f'(x)$\\
    $= (e^{2x})' \cdot \ln x + e^{2x} \cdot \frac{1}{x}$\\
    $= e^{2x} \cdot 2 \cdot \ln x + e^{2x} \cdot \frac{1}{x}$\\
    $= e^{2x} \left(2 \ln x + \frac{1}{x}\right)$
\end{quote}

\section{Skipping Steps} \label{appendix skipping}


According to the probing results in Figure~\ref{fig:multi_step_probe}, we propose two possible mechanisms for skipping in $\tt{Transformer}_{+CoT}$: one is to use continuous layers to achieve skip-step state transitions, and the other is to use a single layer to achieve skip-step transitions.
The key difference is that the former depends on the skipped states, while the latter does not.
To explore what algorithms the model uses in skipping, we design intervention experiments by suppressing skipped tokens.
Specifically, given the prompt $(m_1 \ldots m_n | q_1 \ldots q_{i-1})$, we evaluate the probing changes for $q_{i+1}$ at the last position, while suppressing token $q_{i}$.
We observe that suppressing $q_{i}$ significantly reduces positive probing results to nearly zero at the final position with skipping probabilities varying from 0 to 1, suggesting that $\tt{Transformer}_{+CoT}$ implements skipping through continuous layers.

\section{Noise in Scratchpad} \label{appendix noise}


Given a corrupted prompt $(m_1 \ldots m_n | q_1 \ldots \hat{q}_{i-1})$ (where $\hat{q}_{i-1}$ represents an incorrect state), we probe the incorrect previous state $\hat{q}_{i-1}$, the correct previous state $q_{i-1}$, and the correct next state $q_i$ at the last position. 
The results in Figure~\ref{fig: noise} show that the activation maintains positive probing results for $\hat{q}_{i-1}$ across layers, up to the final one, where it exhibits a high average accuracy of 0.943 for $q_i$ and 0.379 for $q_{i-1}$.

In addition, we propose one possible mechanism that at the $i\text{-th}$ step, the model mitigates the influence of the incorrect input state $\hat{q}_{i-1}$ in the final layer through attending to the preceding position. This allows it to recover the correct input state $q_{i-1}$ and accurately update to $q_{i}$.
To verify this, we mask the position of $q_{i-2}$ in the final layer and analyze the resulting changes in probing accuracy for state $q_{i}$ at the last position. The results in Figure~\ref{fig: noise} show that after masking, the average score drops from above 0.9 to nearly zero, indicating that attending to $q_{i-2}$ in the last layer is crucial for accurate state transitions.
In contrast, we hold ablation experiments with a model trained without noise, and we observe minimal change in the probing accuracy for state $q_{i}$ while masking. 
This suggests that the FSA may develop adaptive attention mechanisms to deal with noise in the scratchpad.

\section{Length Generalization} \label{appendix length}


Numerous factors contribute to poor length generalization in models. 
Here, we analyze the model's failure to generalize to longer sequences from the perspective of MLP neurons. 
We evaluate the precision and recall of activated neurons (see Section~\ref{precision and recall}) on word problems with sequence lengths exceeding those in the training set. 
As shown in Figure~\ref{fig:length_generalization}, both precision and recall drop significantly when the scratchpad steps surpass the maximum steps encountered during training. 
Notably, when the sequence length slightly exceeds the training range, a ``U-turn'' phenomenon emerges in precision during the final inference steps, which vanishes as the length increases further. 
To investigate this ``U-turn" phenomenon, we conducted experiments with models trained on various length ranges. 
Our findings reveal that this effect occurs within a margin of approximately 150\% of the training length, beyond which it disappears. 
The phenomenon suggests that activating the correct neurons may not be the primary factor limiting length generalization. 
In conclusion, enhancing the model's length generalization capability likely requires alternative approaches, such as modifications to the model architecture, loss function, or optimization algorithm.

\begin{figure}[ht]
  \includegraphics[width=\columnwidth]{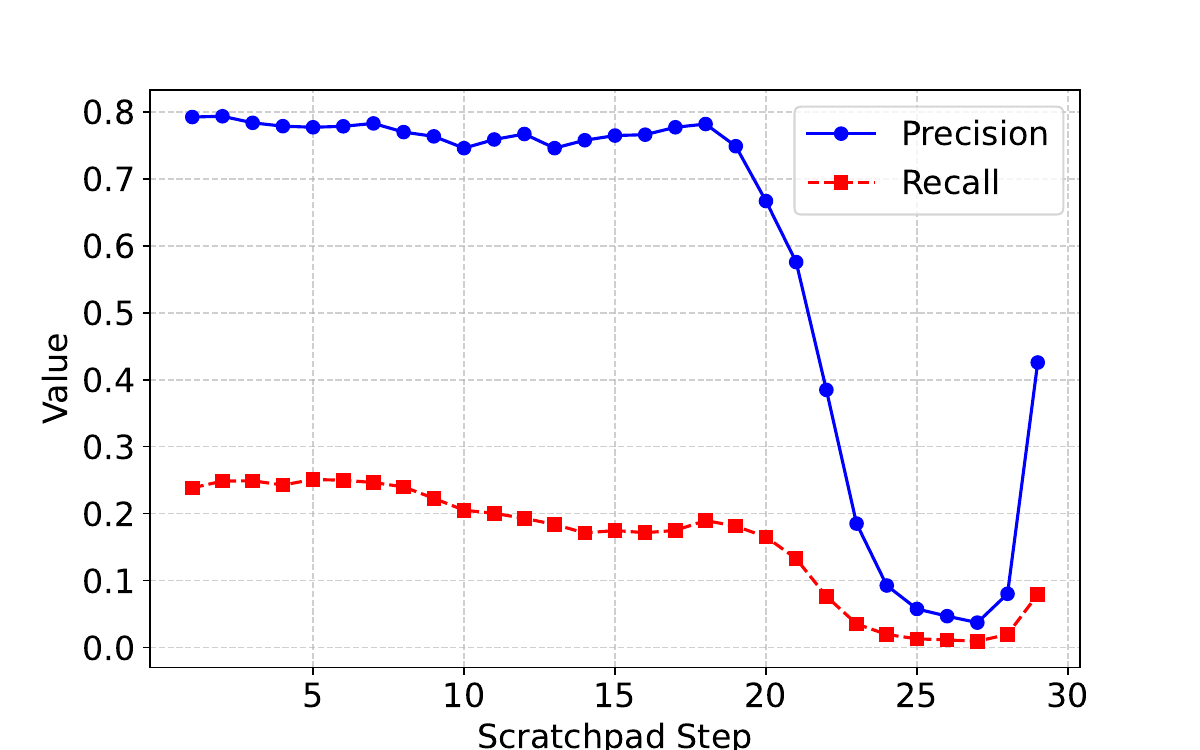}
  \caption{
  MLP11 neurons precision and recall across intermediate steps on word problems with length $30$.
  The model is trained on sequences with length ranging from 2 to 20.
  }
  \label{fig:length_generalization}
\end{figure}

\begin{figure*}[ht]
    \centering
    \begin{subfigure}{0.45\textwidth}
        \centering
        \includegraphics[width=\linewidth]{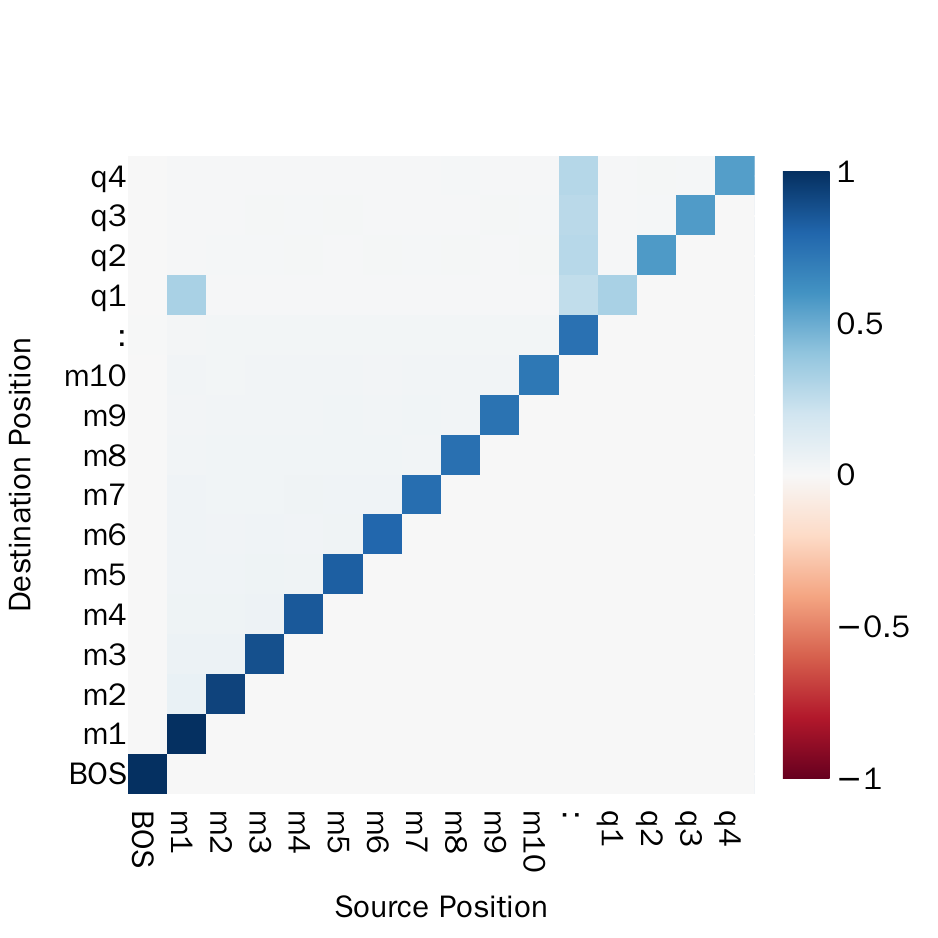}
        \caption{}
    \end{subfigure}
    \begin{subfigure}{0.45\textwidth}
        \centering
        \includegraphics[width=\linewidth]{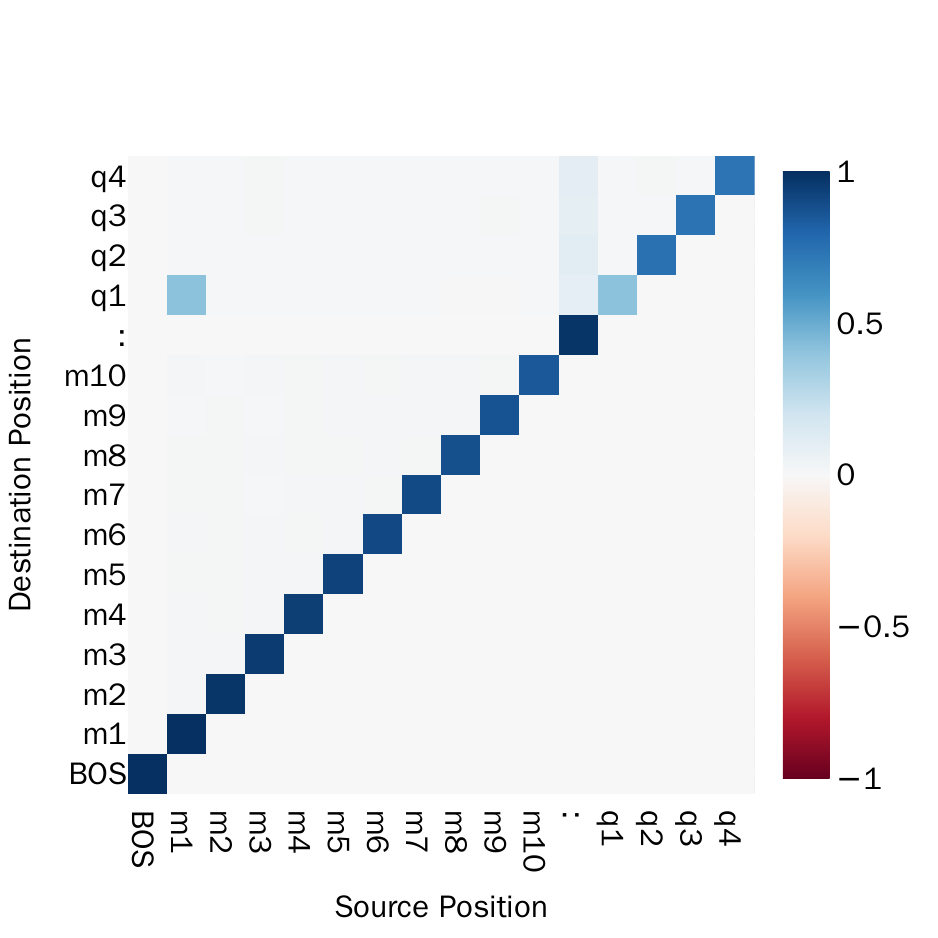}
        \caption{}
    \end{subfigure}
    \begin{subfigure}{0.45\textwidth}
        \centering
        \includegraphics[width=\linewidth]{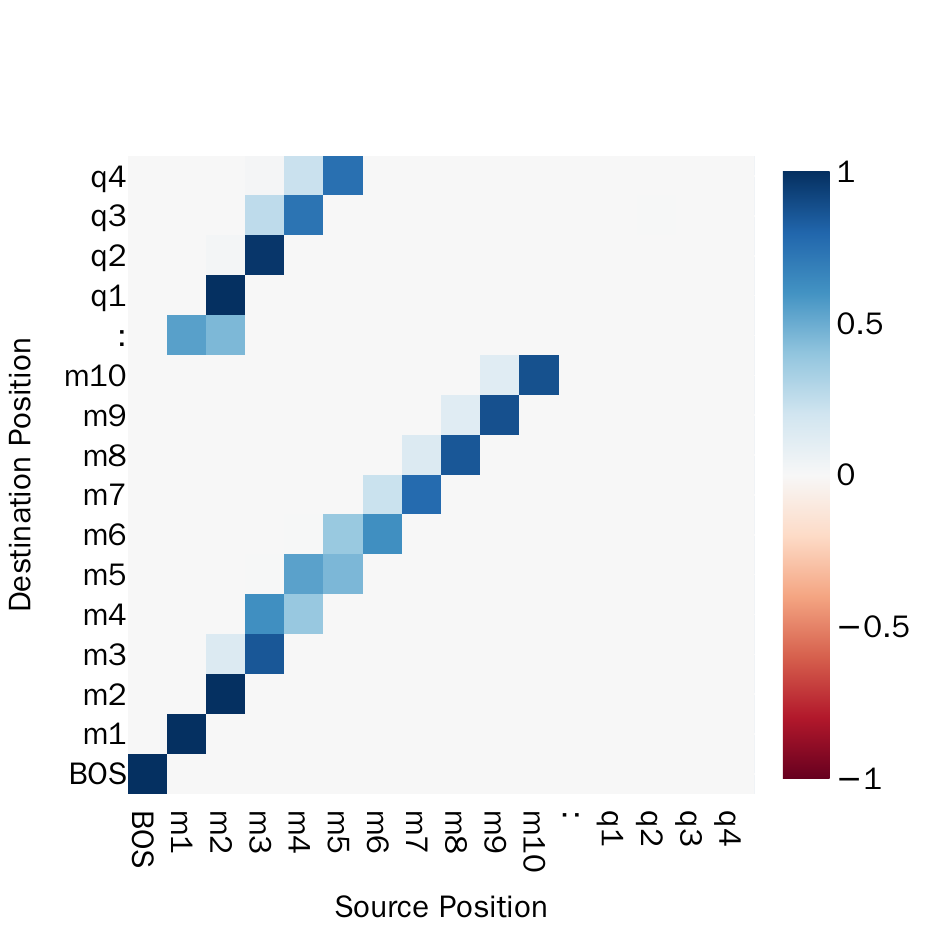}
        \caption{}
    \end{subfigure}
    \begin{subfigure}{0.45\textwidth}
        \centering
        \includegraphics[width=\linewidth]{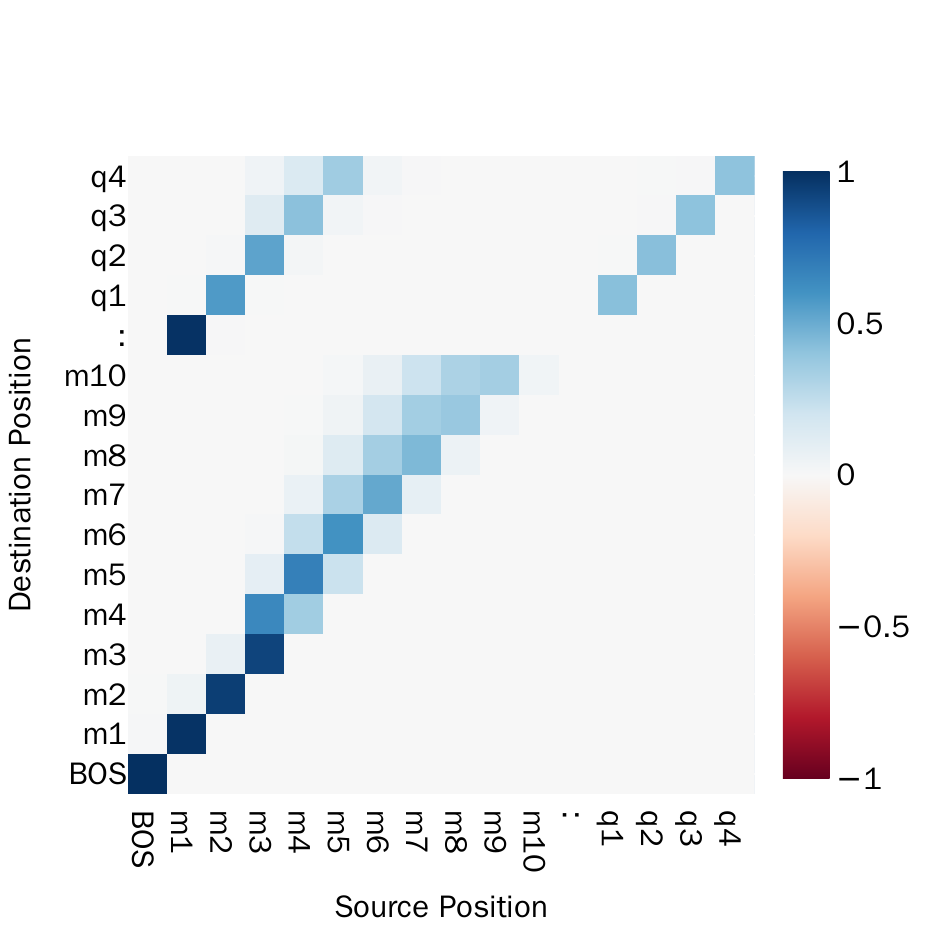}
        \caption{}
    \end{subfigure}
    \caption{Attention patterns of the principal attention heads.
    Specific attention heads send information directly from $m_i$ to $q_{i-1}$.
    Besides, the model learns another pattern: it passes information backward from $m_1$ to $m_n$ backwards until the relay position of ``:'', so that the model can attend to the relay position during inference.
    }
    \label{fig: attention}
\end{figure*}

\begin{figure*}[ht]
    \centering
    \begin{subfigure}{0.45\textwidth}
        \centering
        \includegraphics[width=\linewidth]{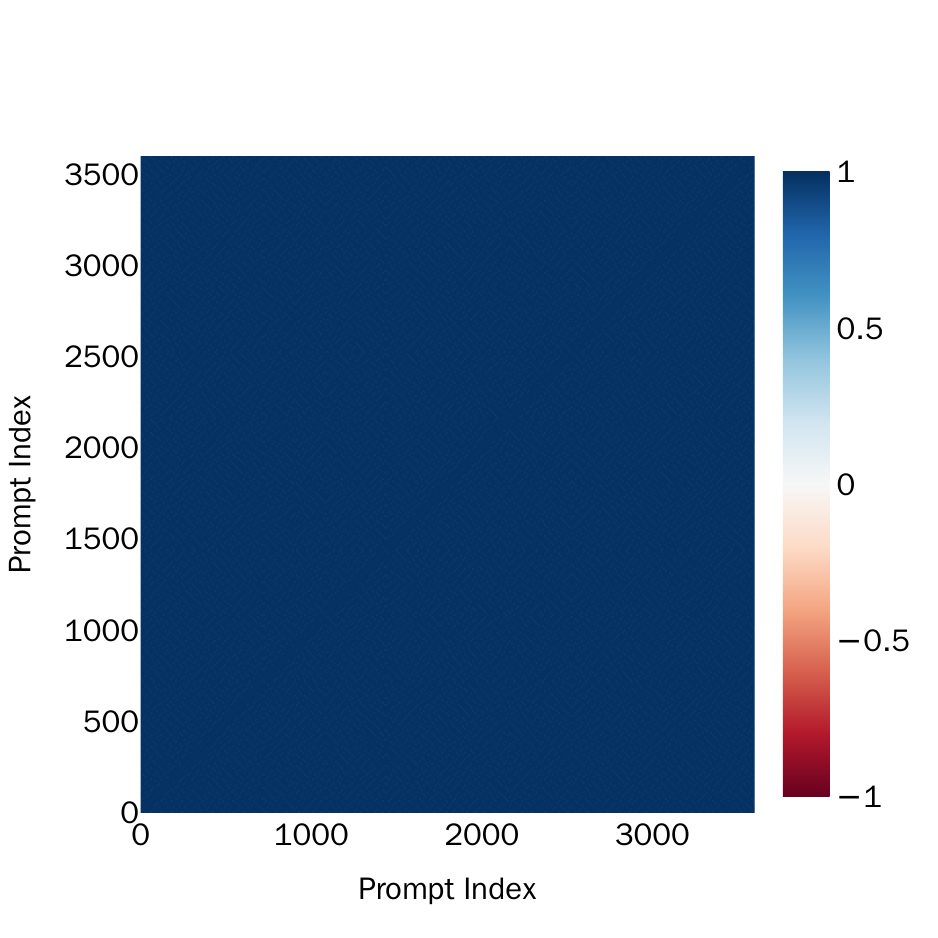}
        \caption{}
    \end{subfigure}
    \begin{subfigure}{0.45\textwidth}
        \centering
        \includegraphics[width=\linewidth]{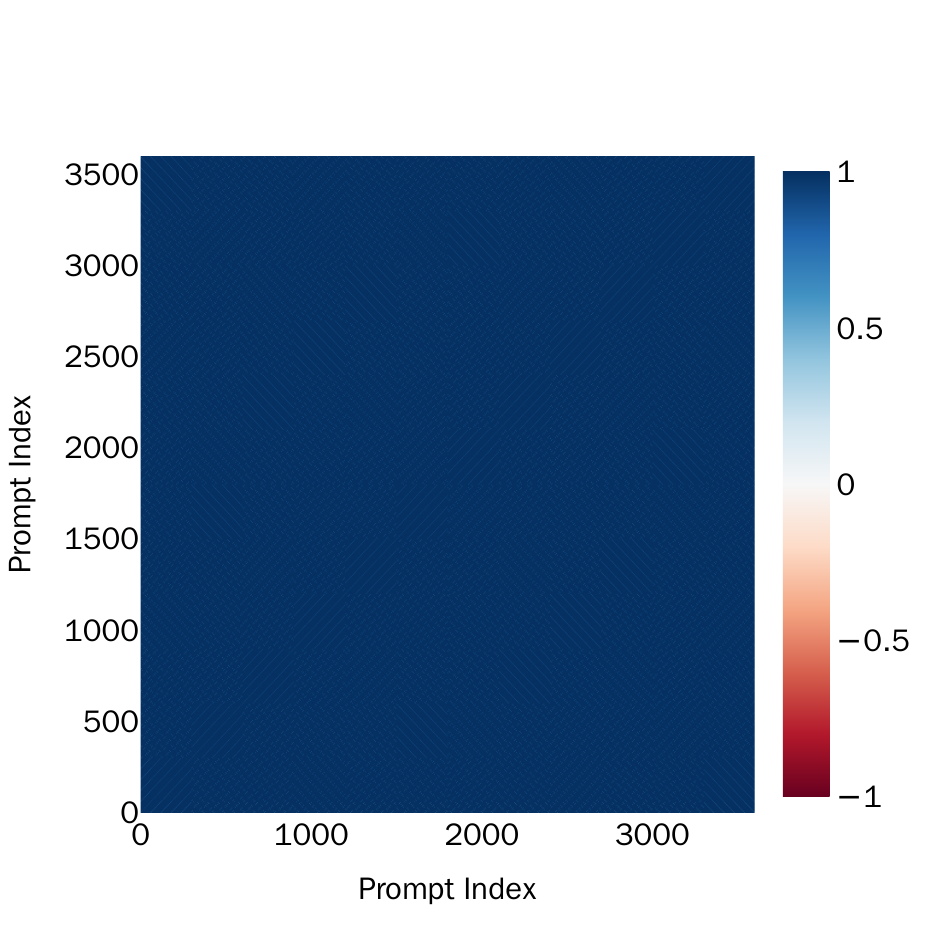}
        \caption{}
    \end{subfigure}
    \begin{subfigure}{0.45\textwidth}
        \centering
        \includegraphics[width=\linewidth]{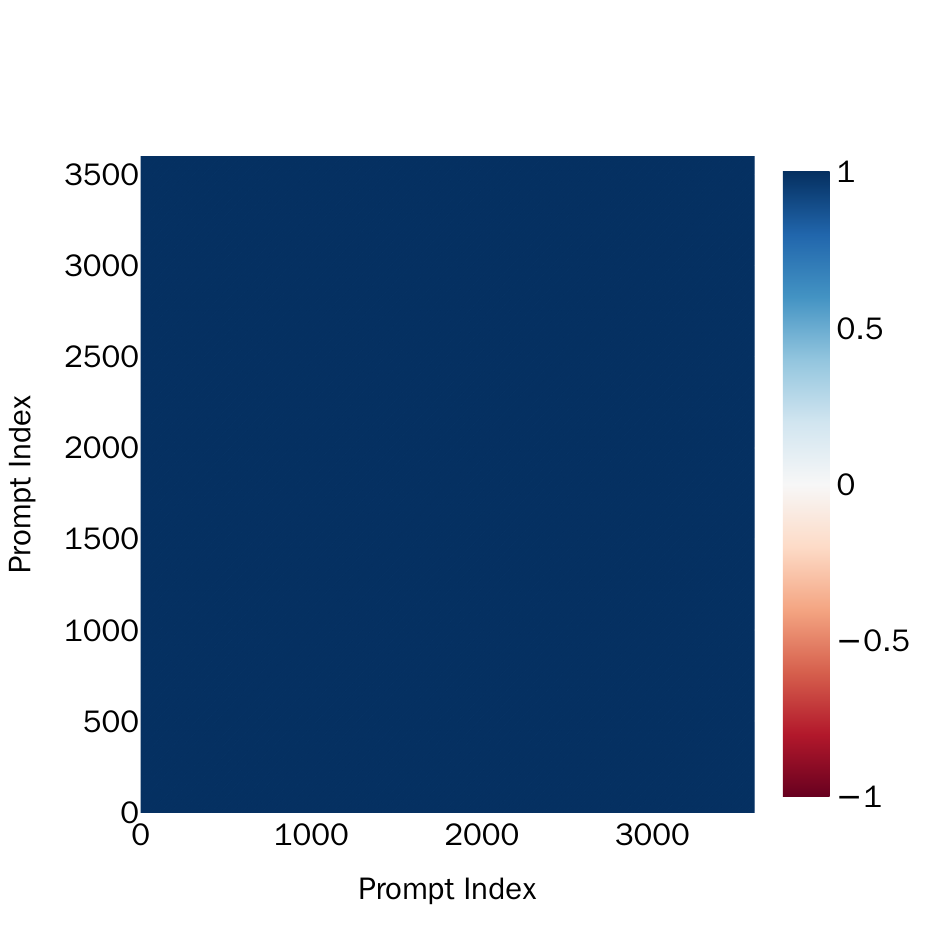}
        \caption{}
    \end{subfigure}
    \begin{subfigure}{0.45\textwidth}
        \centering
        \includegraphics[width=\linewidth]{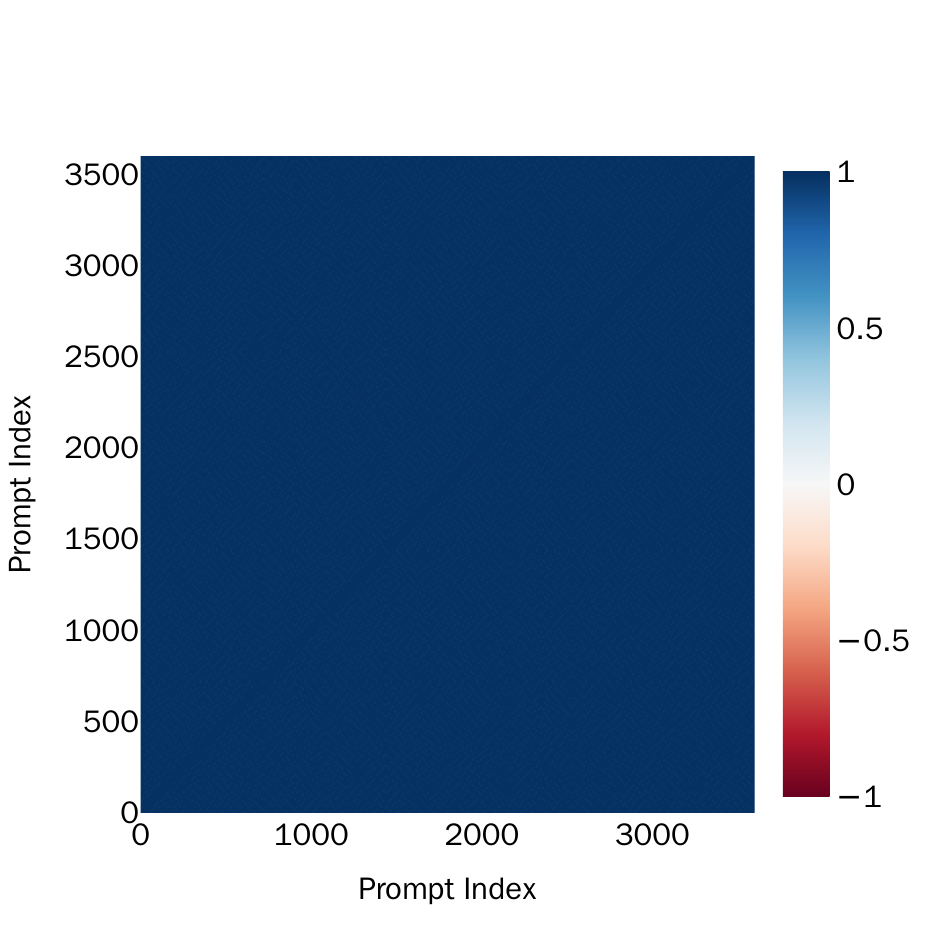}
        \caption{}
    \end{subfigure}
    \caption{(a): Compression and distinction metrics for MLP11 on $A_5$.
    (b): Compression and distinction metrics for MLP11 on $A_4 \times \mathbb{Z}_5$.
    (c): Compression and distinction metrics for MLP11 on $\mathbb{Z}_{60}$.
    (d): Compression and distinction metrics for MLP35 on $A_5$ with larger model size (GPT2-large).
    The entire square's dark color can be interpreted as the metric being nearly 1 for any pairwise combination of prompts.
    }
    \label{fig: stillexists}
\end{figure*}

\begin{figure*}[ht]
    \centering
    \begin{subfigure}{0.45\textwidth}
        \centering
        \includegraphics[width=\linewidth]{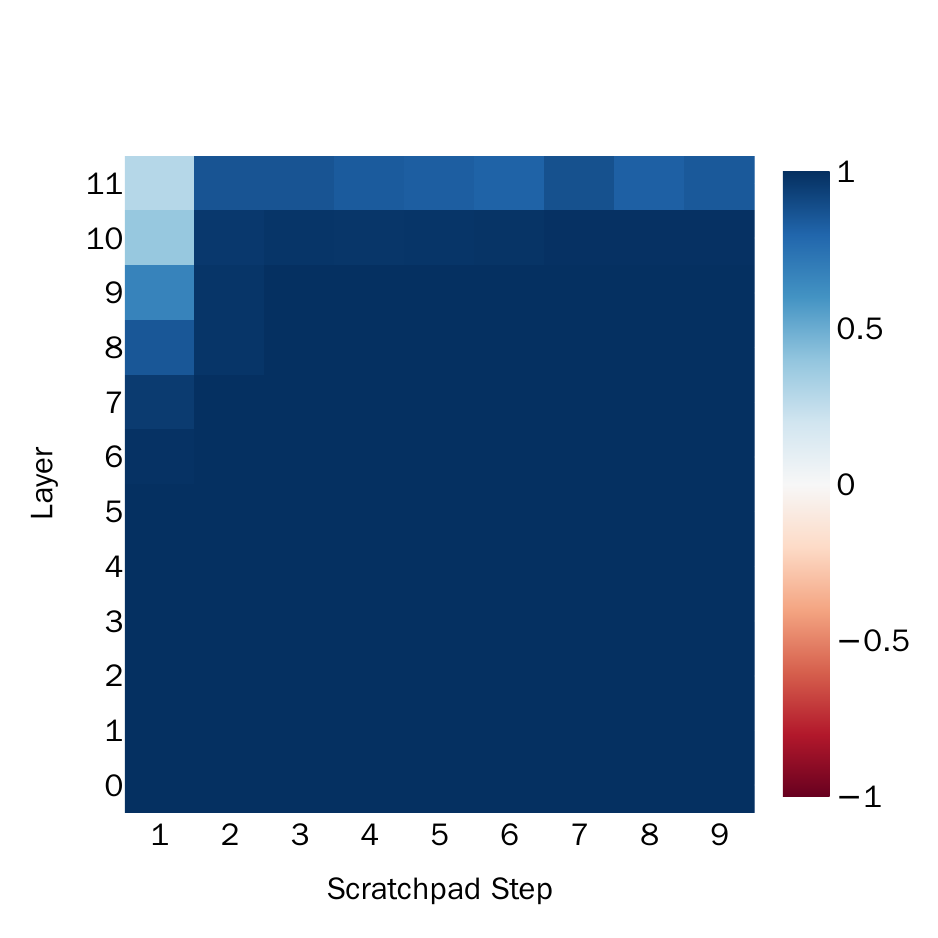}
        \caption{}
    \end{subfigure}
    \begin{subfigure}{0.45\textwidth}
        \centering
        \includegraphics[width=\linewidth]{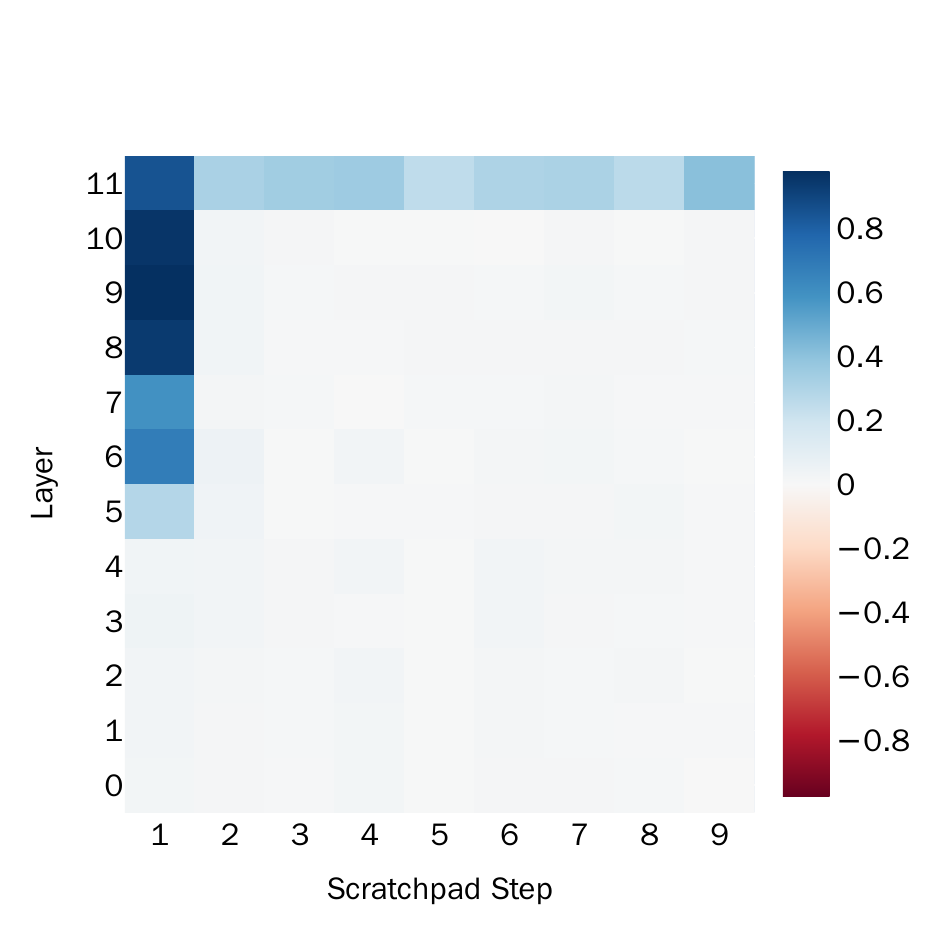}
        \caption{}
    \end{subfigure}
    \begin{subfigure}{0.45\textwidth}
        \centering
        \includegraphics[width=\linewidth]{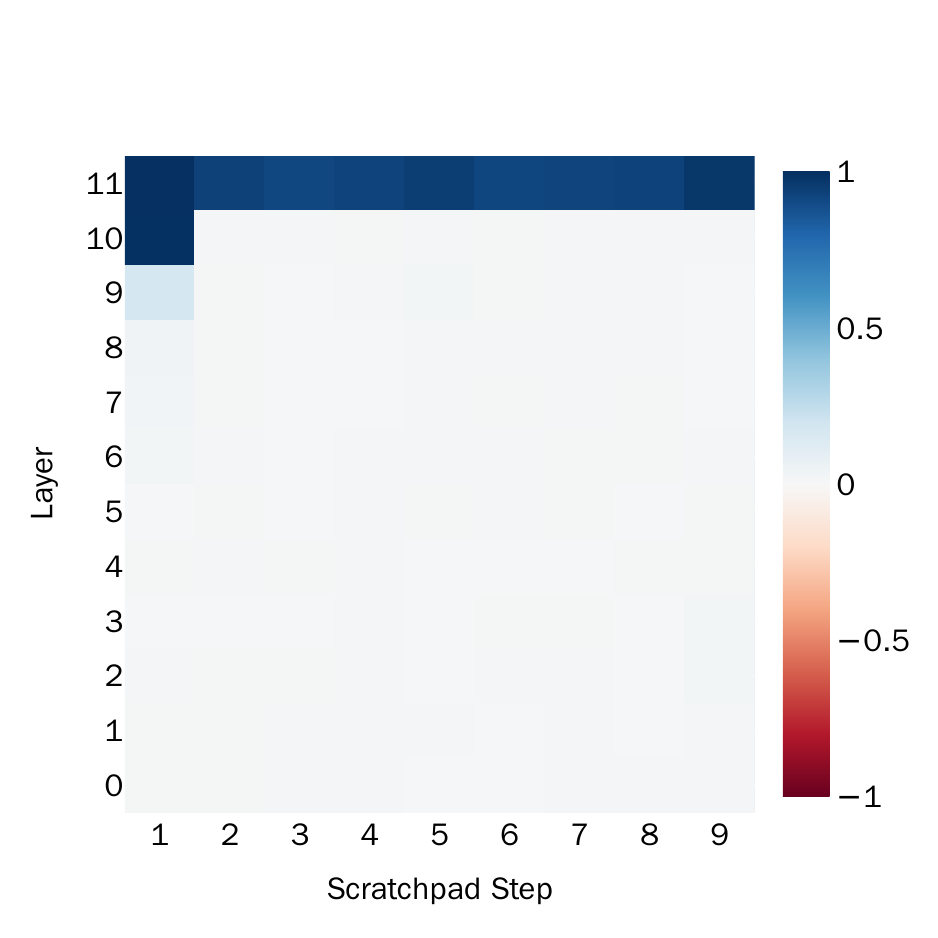}
        \caption{}
    \end{subfigure}
    \begin{subfigure}{0.45\textwidth}
        \centering
        \includegraphics[width=\linewidth]{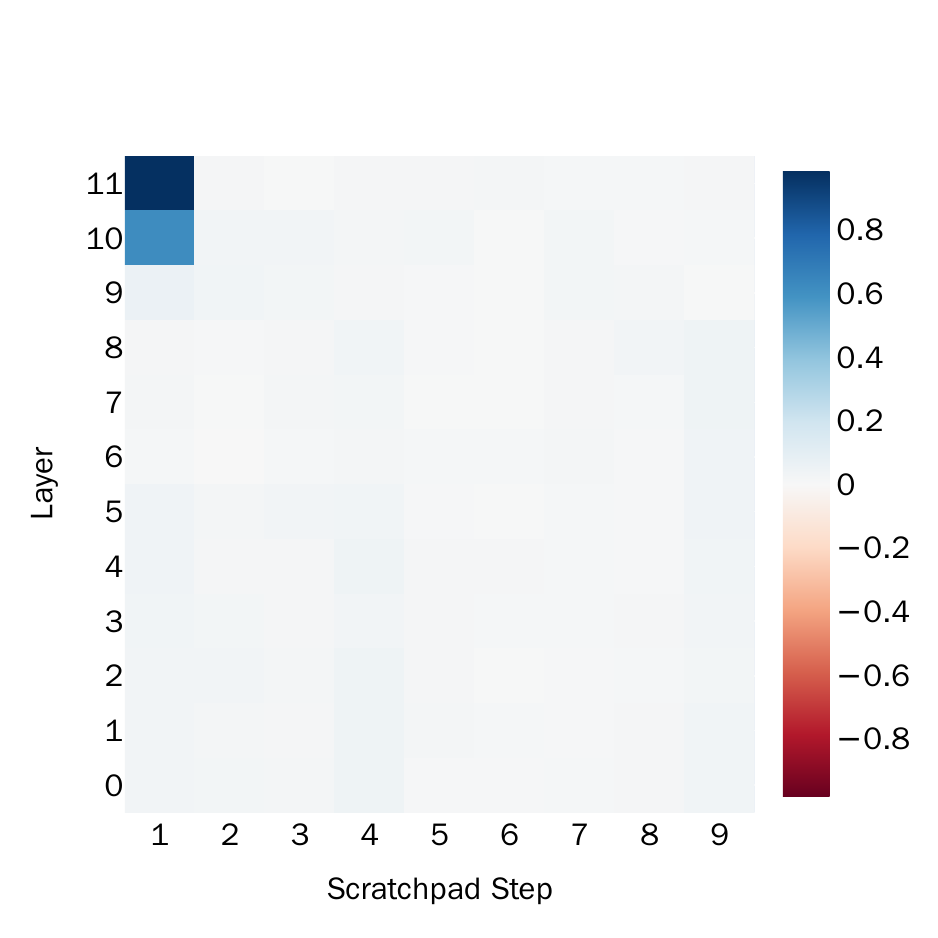}
        \caption{}
    \end{subfigure}
    \caption{(a): Probing results for $\hat{q}_{i-1}$.
    (b): Probing results for $q_{i-1}$.
    (c): Probing results for $q_i$.
    (d): Probing results for $q_i$ while masking the preceding position.
    The model activation maintains high probing performance for $\hat{q}_{i-1}$ across layers, reaching a final layer average accuracy of 0.943 for $q_i$ and 0.379 for $q_{i-1}$.
    However, masking the position of $q_{i-2}$ leads to a sharp drop in the probing score—from above 0.9 to nearly zero.
    }
    \label{fig: noise}
\end{figure*}


\end{document}